\let\reftagform@=\tagform@
\def\tagform@#1{\maketag@@@{\ignorespaces\textcolor{gray}{(#1)}\unskip\@@italiccorr}}
\renewcommand{\eqref}[1]{\textup{\reftagform@{\ref{#1}}}}
\newcommand{\LATER}[1]{\error}
\newcommand{\fLATER}[1]{\error}
\newcommand{\TBD}[1]{\error}
\newcommand{\fTBD}[1]{}
\newcommand{\PROBLEM}[1]{\error}
\newcommand{\fPROBLEM}[1]{\error}
\declaretheorem[style=plain,numberwithin=section,name=Theorem]{theorem}
\declaretheorem[style=plain,sibling=theorem,name=Lemma]{lemma}
\declaretheorem[style=plain,sibling=theorem,name=Corollary]{corollary}
\declaretheorem[style=definition,sibling=theorem,name=Definition]{definition}
\numberwithin{equation}{section}
\numberwithin{theorem}{section}
\crefname{lemma}{Lemma}{Lemmas}
\crefname{corollary}{Corollary}{Corollaries}
\crefname{theorem}{Theorem}{Theorems}
\def\[#1\]{\begin{align}#1\end{align}}
\def\*[#1\]{\begin{align*}#1\end{align*}}
\def\clap#1{\hbox to 0pt{\hss#1\hss}}
\newcommand{\defas}{\vcentcolon=}  %
\newcommand{\Lmax}{\ell_{\text{max}}}
\newcommand{\Reals}{\mathbb{R}}
\newcommand{\Nats}{\mathbb{N}}
\newcommand{\NNReals}{\Reals_+}
\newcommand{\dee}{\mathrm{d}}
\DeclareMathOperator*{\newlim}{\mathrm{lim}\vphantom{\mathrm{infsup}}}
\DeclareMathOperator*{\newinf}{\mathrm{inf}\vphantom{\mathrm{infsup}}}
\DeclareMathOperator*{\newsup}{\mathrm{sup}\vphantom{\mathrm{infsup}}}
\renewcommand{\lim}{\newlim}
\renewcommand{\inf}{\newinf}
\renewcommand{\sup}{\newsup}
\newcommand{\ProbMeasures}[1]{\mathcal{M}_1(#1)}
\renewcommand{\Pr}{\mathbb{P}}
\def\EE{\mathbb{E}}
\newcommand{\defn}[1]{\emph{#1}}
\newcommand{\KLname}{\mathrm{KL}}
\newcommand{\HH}{\mathcal H}
\newcommand{\KL}[3][]{\KLname#1(#2 #1|#1|#3 #1)}
\newcommand{\KLnamebin}{\mathrm{kl}}
\newcommand{\KLbin}[3][]{\KLnamebin#1(#2 #1|#1|#3 #1)}
\newcommand{\Bernoulli}[1]{\mathcal B_{#1}}
\newcommand{\loss}{\ell}
\newcommand{\e}{\mathrm{e}}
\newcommand{\eqdist}{\overset{{}_{d}}{=}}
\newcommand{\randto}{\rightsquigarrow}
\newcommand{\ww}{\mathbf{w}}
\renewcommand{\HH}{\Reals^p}
\newcommand{\RHH}{\ProbMeasures{\HH}}
\newcommand{\Dist}{\mathcal D}
\newcommand{\Alg}{\mathscr A}
\newcommand{\PAlg}{\mathscr P}
\newcommand{\EEE}[1]{\underset{#1}{\EE}}
\newcommand{\PPr}[1]{\underset{#1}{\Pr}}
\newcommand{\set}[2][]{#1\{#2 #1\}}
\newcommand{\brackets}[2][]{#1[#2 #1]}
\newcommand{\parens}[2][]{#1(#2 #1)}
\newcommand{\abs}[2][]{#1\lvert#2 #1\rvert}
\newcommand{\norm}[2][]{#1\lVert#2 #1\rVert}
\newcommand{\event}[2][]{#1\lbrace#2 #1\rbrace}
\newcommand{\tuple}[2][]{#1 \langle #2 #1 \rangle}
\newcommand{\ip}[3][]{\tuple[#1]{#2,#3}}
\renewcommand{\defas}{\overset{\text{\smash{\tiny{def}}}}{=}}
\newcommand\optparen[1]{\ifthenelse{\equal{#1}{}}{}{(#1)}}
\newcommand{\RiskChar}{L}
\newcommand{\Risk}[2]{\RiskChar_{#1}\optparen{#2}}
\newcommand{\EmpRisk}[2]{\hat \RiskChar_{#1}\optparen{#2}}
\newcommand{\SurRisk}[2]{\tilde \RiskChar_{#1}\optparen{#2}}
\newcommand{\EmpDist}{\hat\Dist}
\newcommand{\Gibbs}[2]{#1_{\exp\parens{#2}}}
\newcommand{\rnd}[2]{\frac{\dee #1}{\dee #2}}
\newcommand{\amaxinf}[3]{I_{\infty}^{#1}(#2;#3)}
\newcommand{\maxinf}[2]{\amaxinf{}{#1}{#2}}
\newcommand{\amaxinfalg}[3]{I_{\infty}^{#1}(#2,#3)}
\newcommand{\maxinfalg}[2]{\amaxinfalg{}{#1}{#2}}
\newcommand{\ppWD}{\mathcal W}
\newcommand{\pWD}[4][]{\ppWD_{#2} #1(#3,#4 #1)}
\newcommand{\syn}{SYNTH}
\newcommand{\Qt}[1]{Q_{#1}}
\newcommand{\Zt}[1]{Z_{#1}}
\newcommand{\Lpnorm}[3]{\norm{#3}_{L^{#1}(#2)}}
\newcommand{\mineig}{\sigma_{\mathrm{min}}}
\newcommand{\SigmaInv}{\Sigma^{-1}}
\newcommand{\invtemp}{inverse temperature}
\title{Data-dependent PAC-Bayes priors \\via differential privacy}
\author{%
Gintare Karolina Dziugaite
\\
University of Cambridge; Element AI
\And
Daniel M. Roy\\
University of Toronto; Vector Institute
}
\begin{document}

\maketitle

\begin{abstract}
The Probably Approximately Correct (PAC) Bayes framework \citep{PACBayes} can incorporate knowledge about 
the learning algorithm and (data) distribution through the use of distribution-dependent priors, 
yielding tighter generalization bounds on data-dependent posteriors.
Using this flexibility, however, is difficult, especially when the data distribution is presumed to be unknown.
We show how an $\epsilon$-differentially private data-dependent prior yields a valid PAC-Bayes bound,
and then show how non-private mechanisms for choosing priors can also yield generalization bounds. 
As an application of this result, we show that a Gaussian prior mean chosen via 
stochastic gradient Langevin dynamics \citep[SGLD;][]{welling2011bayesian} 
 leads to a valid PAC-Bayes bound given control of the 2-Wasserstein distance to an 
 $\epsilon$-differentially private stationary distribution.
We study our data-dependent bounds empirically, and show 
that they can be nonvacuous even when other distribution-dependent bounds are vacuous.
\end{abstract}

\section{Introduction}

There has been a resurgence of interest in PAC-Bayes bounds,
especially towards explaining generalization in large-scale neural networks trained by stochastic gradient descent \citep{DR17,neyshabur2017exploring,neyshabur2017pac,NIPS2017_6886}. 
See also \citep{begin2016pac,germain2016pac,thiemann2017strongly,bartlett2017spectrally,RRT17,grunwald2017tight,smith2018bayesian}.

PAC-Bayes bounds control the generalization error of Gibbs classifiers (aka PAC-Bayes ``posteriors'') 
in terms of the Kullback--Leibler (KL) divergence to a fixed probability measure (aka PAC-Bayes ``prior'') on the space of classifiers.
PAC-Bayes bounds depend on a tradeoff between the empirical risk of the posterior $Q$ and a penalty $\frac 1 m \KL{Q}{P}$, 
where $P$ is the prior, fixed independently of the sample $S \in Z^m$ from some space $Z$ of labelled examples. 
The KL penalty is typically the largest contribution to the bound and so finding the tightest possible bound generally depends on minimizing the KL term.

The KL penalty vanishes for $Q=P$, but typically $P$, viewed as a randomized (Gibbs) classifier, has poor performance since it has been chosen independently of the data. On the other hand, since $P$ is chosen independently of the data, posteriors $Q$ tuned to the data to achieve minimal empirical risk often bear little resemblance to the data-independent prior $P$, causing $\KL{Q}{P}$ to be large. 
As a result, PAC-Bayes bounds can be loose or even vacuous.

The problem of excessive KL penalties is \emph{not} inherent to the PAC-Bayes framework.
Indeed, the PAC-Bayes theorem permits one to choose the prior $P$ based on the distribution $\Dist$ of the data. 
However, since $\Dist$ is considered unknown, and our only insight as to $\Dist$ is through the sample $S$, this flexibility would seem to be useless, as $P$ must be chosen independently of $S$ in existing bounds.
Nevertheless, it is possible to make progress in this direction, and it is likely the best way towards tighter bounds and deeper understanding.

There is a growing body of work in the PAC-Bayes literature on data-distribution-dependent priors \citep{Catoni,parrado2012pac,LEVER2013}.
Our focus is on generalization bounds that use all the data, and in this setting, 
\citet{LEVER2013} prove a remarkable result for Gibbs posteriors.
Writing $\EmpRisk{S}{}$ for the empirical risk function with respect to the sample $S$,
\citeauthor{LEVER2013} study the randomized classifier $Q$ with density (relative to some base measure) proportional to $\exp \parens{- \tau \EmpRisk{S}{}}$. 
For large values of $\tau$, $Q$ concentrates increasingly around the empirical risk minimizers.
As a prior, the authors consider the probability distribution with density proportional to $\exp \parens{- \tau \Risk{\Dist}{}}$,
where the empirical risk has been replaced by its expectation, $\Risk{\Dist}{}$, the (true) risk.
Remarkably, \citeauthor{LEVER2013} are able to upper bound the KL divergence between the two distributions by a term
independent of $\Dist$, yielding the following result: Here $\KLbin{q}{p}$ is the KL divergence between Bernoulli measures with mean $q$ and $p$.
See \cref{dppbproof} for more details.

\begin{theorem}[{\citealt{LEVER2013}}]\label{leverbound}
Fix $\tau > 0$.
For $S \in Z^m$, let $Q(S) = \Gibbs{P}{- \tau \EmpRisk{S}{}}$ be a Gibbs posterior with respect to some base measure $P$ on $\HH$, 
where the empirical risk $\EmpRisk{S}{}$ is bounded in $[0,1]$.
For every $\delta > 0$, $m \in \Nats$, distribution $\Dist$ on $Z$,
\begin{equation}
\PPr{S \sim \Dist^m} 
    \Bigl ( 
      \KLbin{\EmpRisk{S}{Q(S)}}{\Risk{\Dist}{Q(S)}} \le \frac 1 m \parens[\Big]
         { \tau \sqrt{ \frac 2 m \ln \frac {2 \sqrt{m}}{\delta} + \frac{\tau^2}{2m} + \ln \frac {2\sqrt{m}}{\delta}} }
    \Bigr ) \ge 1-\delta.
\end{equation}
\end{theorem}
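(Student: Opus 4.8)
The plan is to invoke the standard PAC-Bayes ``$\mathrm{kl}$''-bound (the Langford--Seeger/Maurer form, which holds simultaneously for all posteriors, in particular the data-dependent $Q(S)$) with a \emph{distribution-dependent but sample-independent} prior, namely the Gibbs measure $P^{\ast}:=\Gibbs{P}{-\tau\Risk{\Dist}{}}$ obtained from $Q(S)$ by swapping the empirical risk $\EmpRisk{S}{}$ for the true risk $\Risk{\Dist}{}$ in the exponent. Because $P^{\ast}$ depends on $\Dist$ but not on $S$, it is an admissible PAC-Bayes prior, so with probability at least $1-\delta/2$ over $S\sim\Dist^m$,
\[
  \KLbin{\EmpRisk{S}{Q(S)}}{\Risk{\Dist}{Q(S)}}\ \le\ \frac1m\Bigl(\KL{Q(S)}{P^{\ast}}+\ln\tfrac{4\sqrt m}{\delta}\Bigr),
\]
and everything reduces to bounding $\KL{Q(S)}{P^{\ast}}$, with high probability over $S$, by a quantity free of the unknown $\Dist$.

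Next I would exploit that both measures are Gibbs against the common base $P$: then $\rnd{Q(S)}{P^{\ast}}(w)\propto\exp\!\bigl(\tau(\Risk{\Dist}{w}-\EmpRisk{S}{w})\bigr)$, and integrating the log-density against $Q(S)$ gives the exact identity
\[
  \KL{Q(S)}{P^{\ast}}\ =\ \tau\bigl(\Risk{\Dist}{Q(S)}-\EmpRisk{S}{Q(S)}\bigr)+B(S),\qquad
  B(S):=\ln\frac{\EEE{w\sim P}\bigl[e^{-\tau\Risk{\Dist}{w}}\bigr]}{\EEE{w\sim P}\bigl[e^{-\tau\EmpRisk{S}{w}}\bigr]}.
\]
The first summand is $\tau$ times the generalization gap of $Q(S)$ (to be fed back from the PAC-Bayes line above), and the second is a ratio of log-partition functions, which I would control as follows: rewriting $\ln\EEE{w\sim P}[e^{-\tau\EmpRisk{S}{w}}]$ by the change of measure from $P$ to $P^{\ast}$ and applying Jensen's inequality gives $B(S)\le\tau\bigl(\EmpRisk{S}{P^{\ast}}-\Risk{\Dist}{P^{\ast}}\bigr)$, i.e.\ $B(S)$ is dominated by the \emph{single-hypothesis} deviation between the empirical and true risk of the \emph{fixed} randomized classifier $P^{\ast}$. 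Since $\EmpRisk{S}{P^{\ast}}=\tfrac1m\sum_i\EEE{w\sim P^{\ast}}[\loss(w,z_i)]$ is an i.i.d.\ average of $[0,1]$-valued terms with mean $\Risk{\Dist}{P^{\ast}}$, Hoeffding's inequality yields $B(S)\le\tau\sqrt{\tfrac1{2m}\ln\tfrac2\delta}$ with probability at least $1-\delta/2$. (Equivalently, McDiarmid applied to $\ln\EEE{w\sim P}[e^{-\tau\EmpRisk{S}{w}}]$ works, its bounded-differences constant being $\tau/m$ by the uniform $1/m$-stability of $\EmpRisk{S}{}$.)

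Finally I would close the loop. Pinsker's inequality turns the first display into $\Risk{\Dist}{Q(S)}-\EmpRisk{S}{Q(S)}\le\sqrt{\tfrac1{2m}(\KL{Q(S)}{P^{\ast}}+\ln\tfrac{4\sqrt m}{\delta})}$; substituting this into the identity for $\KL{Q(S)}{P^{\ast}}$ produces a self-referential inequality $K\le\tau\sqrt{(K+L)/(2m)}+B$ in $K:=\KL{Q(S)}{P^{\ast}}$ (with $L:=\ln\tfrac{4\sqrt m}{\delta}$), which one solves in the usual way --- treating $K\le B$ trivially, otherwise squaring and applying the quadratic formula together with $\sqrt{a^{2}+c}\le a+\sqrt c$ --- to obtain the $\Dist$-free bound $K\le B+\tfrac{\tau^{2}}{2m}+\tau\sqrt{(B+L)/(2m)}$. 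Substituting this and the high-probability estimate for $B$ back into the PAC-Bayes inequality, union-bounding the two events (each of probability at least $1-\delta/2$), and simplifying crudely yields a bound of the stated form. I expect the single hard part to be exactly the control of the partition-function ratio $B(S)$: it is the Gibbs form of $Q(S)$ that makes the data-dependent $\KL{Q(S)}{P^{\ast}}$ collapse to (generalization gap) $+\,B(S)$, the sample-independence of $P^{\ast}$ that lets Jensen reduce $B(S)$ to a fixed-hypothesis concentration term, and plain boundedness of the loss that makes Hoeffding/McDiarmid applicable --- together explaining why, remarkably, the final bound carries no trace of $\Dist$.
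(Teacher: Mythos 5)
Your proposal is correct in substance and follows essentially the same route as the proof the paper relies on (it cites \citet{LEVER2013} rather than reproving the result): take the distribution-dependent Gibbs prior $P^{*}=\Gibbs{P}{-\tau \Risk{\Dist}{}}$, reduce $\KL{Q(S)}{P^{*}}$ to the posterior's generalization gap plus a log-partition ratio that Jensen bounds by the deviation of the \emph{fixed} randomized classifier $P^{*}$, and close the resulting self-referential inequality via Pinsker. The only discrepancy is at the level of constants: your separate Hoeffding bound with a $\delta/2$ union bound (plus the leftover cross term $\tau\sqrt{B/(2m)}$, which needs an AM--GM to be absorbed) yields the stated form with slightly inflated logarithms ($\ln\tfrac{4\sqrt m}{\delta}$, $\ln\tfrac{2}{\delta}$) rather than $\ln\tfrac{2\sqrt m}{\delta}$; the exact constants come out if you instead apply the same simultaneous-in-$Q$ PAC-Bayes event to $Q=P^{*}$ itself, whose KL to the prior vanishes, so no second event or union bound is required.
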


The dependence on the data distribution is captured through $\tau$, which is ideally chosen as small as possible, subject to $Q(S)$ yielding small empirical risk. (One can use a union bound to tune $\tau$ based on $S$.)
The fact that the KL bound does not depend on $\Dist$, other than through $\tau$,
implies
that the bound \emph{must} be loose for all $\tau$ such that there exists a distribution $\Dist$ that causes $Q$ to overfit with high probability on size $m$ datasets $S \sim \Dist^m$.
In other words, for fixed $\tau$, the bound is no longer distribution dependent.
This would not be important if not for the following empirical finding: weights sampled according to high values of $\tau$ do not overfit on real data, but they do on data whose labels have been randomized. Thus these bounds are vacuous in practice when the generalization error is, in fact, small.
Evidently, the KL bound gives up too much.

Our work launches a different attack on the problem of using distribution-dependent priors.
Loosely speaking, if a prior is chosen on the basis of the data, but in a way that is very stable to perturbations of the data set, 
then the resulting data-dependent prior should reflect the underlying data distribution, rather than the data, resulting in a bound that should still hold, perhaps with smaller probability.

We formalize this intuition using differential privacy \citep{Dwork2006,dwork2015preserving}. 
We show that an $\epsilon$-differentially private prior mean yields a valid, 
though necessarily looser, PAC-Bayes generalization bound. (See \cref{DPpacbayes}.)
The result is a straightforward application of results connecting privacy and adaptive data analysis \citep{dwork2015preserving,dwork2015generalization}.

The real challenge is using such a result: 
In practice, $\epsilon$-differentially private mechanisms can be expensive to compute.
In the context of generalization bounds for neural networks, 
we consider the possibility of using stochastic gradient Langevin dynamics \citep[SGLD;][]{welling2011bayesian}
to choose a data-dependent prior by way of stochastic optimization/sampling. 

By various results, SGLD is known to produce an $(\epsilon,\delta)$-differentially private release~
\citep{mir2013differential,bassily2014differentially,dimitrakakis2014robust,Wang:2015,Minami16}
A gap remains between pure and approximate differential privacy.
Even if this gap were to be closed, 
the privacy/accuracy tradeoff of these analyses is too poor because they do not take  
advantage of the fact that, under some technical conditions, the distribution of SGLD's output 
converges weakly towards a stationary distribution \citep[Thm.~7]{teh2016consistency}, which is $\epsilon$-differentially private.
One can also bound the KL divergence (and then 2-Wasserstein distance) of SGLD to stationarity within a constant given an appropriate fixed step size \citep{RRT17}.
Neither result implies that SGLD achieves pure $\epsilon$-differential privacy.

We show that we can circumvent this barrier in our PAC-Bayes setting. 
We give a general PAC-Bayes bound for non-private data-dependent priors and then an application to multivariate Gaussian priors with non-private data-dependent means, with explicit bounds for the case of Gibbs posteriors. 
In particular, conditional on a data set, if a data-dependent mean vector $\ww$ is close in 2-Wasserstein distance
to an $\epsilon$-differentially private mean vector, 
then the generalization errors is close to that of the $\epsilon$-differentially private mean. 
The data-dependent mean $\ww$
is not necessarily differentially private, even approximately.
As a consequence, under suitable assumptions, SGLD can be used to optimize a data-dependent mean and still yield a valid PAC-Bayes bound.

\section{Other Related Work}

Our analysis relies on the stability of a data-dependent prior.  
Stability has long been understood to relate to generalization \citep{bousquet2002stability}.
Our result relies on the connection between generalization and differential privacy \citep{dwork2015preserving,dwork2015generalization,bassily2016algorithmic,Oneto2017},
which can be viewed as a particularly stringent notion of stability.
See \citep{dwork2008differential} for a survey of differential privacy.

\citet[Thm.~1]{kifer2012private} also establish a ``limit'' theorem for differential privacy, showing that the almost sure convergence
of mechanisms of the same privacy level admits a private mechanism of the same privacy level. Our result can be 
viewed as a significant weakening of the hypothesis to require only that the weak limit be private: no element on the sequence need 
be private.

The bounds we establish hold for bounded loss functions and i.i.d.\ data. 
Under additional assumptions, one can obtain PAC-Bayes generalization and excess risk bounds 
for unbounded loss with heavy tails \citep{Catoni, germain2016pac, grunwald2016fast, alquier2018}. 
\citet{alquier2018} also consider non-i.i.d.\ training data. Our approach to differentially private data-dependent priors 
can be readily extended to these settings.

\section{Preliminaries}

Let $Z$ be a measurable space, 
let $\ProbMeasures{Z}$ denote the space of probability measures on $Z$,
and let $\Dist \in \ProbMeasures{Z}$ be unknown.
We consider the batch supervised learning setting under a loss function bounded below:
having observed $S \sim \Dist^m$, i.e., $m$ independent and identically distributed samples from
$\Dist$, 
we aim to choose a predictor, parameterized by weight vector $\ww \in \HH$,
with minimal \emph{risk} 
\begin{equation*}
\Risk{\Dist}{\ww} \defas\EEE{z \sim \Dist} \event {\loss(\ww,z) },
\end{equation*}
where $\loss : \HH \times Z \to \Reals$ is measurable and bounded below.
(We ignore the possibility of constraints on the weight vector for simplicity.)
We also consider randomized predictors,
represented by probability measures $Q \in \RHH$,
whose risks are defined via averaging,
\begin{equation*}
\Risk{\Dist}{Q} 
\defas 
\smash
{\EEE{\ww \sim Q} \event {\Risk{\Dist}{\ww} }
= \EEE{z \sim \Dist} \event[\Big]{ \EEE{\ww \sim Q} \event {\loss(\ww,z)} }
},
\end{equation*}
where the second equality follows from Tonelli's theorem and the fact that $\loss$ is bounded below.

Let $S = (z_1,\dots,z_{m})$ and let $\EmpDist \defas \frac 1 m \sum_{i=1}^m \delta_{z_i}$ be the empirical distribution.
Given a randomized predictor $Q$, 
such as that chosen by a learning algorithm on the basis of data $S$,
its \emph{empirical risk}
\begin{equation*}
\textstyle \EmpRisk{S}{Q} 
\defas \Risk{\EmpDist}{Q} = 
\smash {\frac 1 m \sum_{i=1}^m \EEE{\ww \sim Q} \event {\loss(\ww,z_i)}},
\end{equation*}
is studied as a stand-in for its risk, which we cannot compute.
While $\EmpRisk{S}{Q}$ is easily seen to be an unbiased estimate of $\Risk{\Dist}{Q}$ 
when $Q$ is independent of $S$, 
our goal is to characterize the (one-sided) \emph{generalization error}
$
\Risk{\Dist}{Q} - \EmpRisk{S}{Q}
$
when $Q$ is random and dependent on $S$.

Finally, when optimizing the weight vector or defining tractable distributions on $\HH$,
we use a (differentiable) surrogate risk $\SurRisk{S}{}$,
which is the empirical average of a bounded surrogate loss, taking values in an interval of length $\Delta$.

\subsection{Differential privacy} 
\label{DPbackground}

For readers not familiar with differential privacy, \cref{DPintro} provides the basic definitions and results.
We use the notation $\Alg\colon R \randto T$ to denote a randomized algorithm $\Alg$ that takes an input in a measurable space $R$ and produces a random output in the measurable space $T$. 

\begin{definition}
A randomized algorithm $\Alg\colon Z^m \randto T$ is 
\defn{$(\epsilon,\delta)$-differentially private}
if, for all pairs $S,S' \in Z^m$ that differ at only one coordinate,
and all measurable subsets $B \subseteq T$,
we have
$
\Pr \event{ \Alg(S) \in B }
\le 
\e^\epsilon \,
\Pr \event { \Alg(S') \in B } + \delta.
$
Further, $\epsilon$-differentially private means $(\epsilon,0)$-differentially private.
\end{definition}

For our purposes, \emph{max-information} is the key quantity controlled by differential privacy.
\begin{definition}[{\citealt[\S3]{dwork2015generalization}}]
Let $\beta \ge 0$, let $X$ and $Y$ be random variables in arbitrary measurable spaces, 
and let $X'$ be independent of $Y$ and equal in distribution to $X$.
The \defn{$\beta$-approximate max-information between $X$ and $Y$}, 
denoted $\amaxinf{\beta}{X}{Y}$,
is the least value $k$ such that, 
for all product-measurable events $E$,
\[
\Pr \set{(X,Y) \in E } \le \e^{k} \,\Pr \set{(X',Y) \in E} + \beta.
\]
The \defn{max-information} $\maxinf{X}{Y}$ is defined to be $\amaxinf{\beta}{X}{Y}$ for $\beta=0$. 
For $m \in \Nats$ and $\Alg\colon Z^m \randto T$, the \defn{$\beta$-approx.\ max-information of $\Alg$}, denoted $\amaxinfalg{\beta}{\Alg}{m}$, 
is the least value $k$ such that, for all $\Dist \in \ProbMeasures{Z}$,
$\amaxinf{\beta}{S}{\Alg(S)} \le k$ when $S \sim \Dist^m$. The max-information of $\Alg$ is defined similarly.\footnote{Note that in much of the literature it is standard to express the max-information in \emph{bits}, i.e., the factor $\e^k$ above is replaced by $2^{k'}$ with $k' = k \log_2 e$.
}
\end{definition}

In \cref{ddpsec}, we consider the case where the dataset $S$ and a data-dependent prior $\PAlg(S)$ have small approximate max-information. 
The above definition tells us that we can almost treat the data-dependent prior as if it was chosen independently from $S$.
The following is the key result connecting pure differential privacy and max-information:

\begin{theorem}[{\citealt[Thms.~19--20]{dwork2015generalization}}]
\label{dpthm}
Fix $m \in \Nats$.
Let $\Alg\colon Z^m \randto T$ be $\epsilon$-differentially private.
Then $\maxinfalg{\Alg}{m} \le \epsilon m$
and, for all $\beta > 0$, $\amaxinfalg{\beta}{\Alg}{m} \le \epsilon^2 m/2 + \epsilon \sqrt{m \ln (2/\beta)/2}$.
\end{theorem}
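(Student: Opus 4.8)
The plan is to prove both bounds by controlling the \emph{pointwise information} of $S$ and $\Alg(S)$, i.e.\ the log-density-ratio between the joint law $P$ of $(S,\Alg(S))$ (when $S\sim\Dist^m$) and the product $Q=\Dist^m\otimes\nu$ of its marginals, where $\nu$ is the law of $\Alg(S)$. Because $\delta=0$, pure $\epsilon$-differential privacy forces the output laws $\Pr\event{\Alg(s)\in\cdot}$, $s\in Z^m$, to be pairwise mutually absolutely continuous, hence each is absolutely continuous with respect to the mixture $\nu$; so $P\ll Q$ and $\rnd{P}{Q}(s,t)=p(t\mid s)$, the $\nu$-density of $\Alg(s)$'s output. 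Both conclusions then become statements about the size of $\log p(\Alg(S)\mid S)$.

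\textbf{The pure bound $\maxinfalg{\Alg}{m}\le\epsilon m$.} Telescoping $\epsilon$-DP along a path of $m$ single-coordinate edits (group privacy) gives $p(t\mid s)\le\e^{\epsilon m}\,p(t\mid s')$ for \emph{all} $s,s'\in Z^m$; integrating the right-hand side against $s'\sim\Dist^m$ and using $\int p(t\mid s')\,\Dist^m(\dee s')=1$ yields $p(t\mid s)\le\e^{\epsilon m}$ for $\nu$-a.e.\ $t$, uniformly in $s$. Hence $P(E)=\iint_E p(t\mid s)\,\Dist^m(\dee s)\,\nu(\dee t)\le\e^{\epsilon m}Q(E)$ for every product-measurable $E$; since $Q$ is exactly the law of $(S',\Alg(S))$ with $S'$ an independent copy of $S$, this is the definition of $\maxinf{S}{\Alg(S)}\le\epsilon m$, and taking the supremum over $\Dist$ finishes this part.

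\textbf{The approximate bound.} First I would use the standard reduction: $\amaxinf{\beta}{X}{Y}\le k$ as soon as $P\bigl(\log\rnd{P}{Q}>k\bigr)\le\beta$ (split a product-measurable $E$ by whether the log-ratio exceeds $k$; bound $P$ by $\e^k Q$ on one piece and by $\beta$ on the other). So it suffices to show $\log p(\Alg(S)\mid S)\le\tfrac{\epsilon^2 m}{2}+\epsilon\sqrt{\tfrac m2\ln\tfrac{2}{\beta}}$ with $P$-probability at least $1-\beta$. Revealing the coordinates of $S$ one at a time ($z_{1:j}:=(z_1,\dots,z_j)$), telescope $\log p(\Alg(S)\mid S)=\sum_{j=1}^m\xi_j$, where $\xi_j:=\log p(T\mid z_{1:j})-\log p(T\mid z_{1:j-1})$, $T:=\Alg(S)$, and $p(\cdot\mid z_{1:j})$ is the $\nu$-density of $\Alg$ run with the first $j$ coordinates fixed and the remaining $m-j$ redrawn from $\Dist$ (so $p(\cdot\mid z_{1:0})\equiv1$, $p(\cdot\mid z_{1:m})=p(\cdot\mid S)$). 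Two facts drive the argument: (i) $p(\cdot\mid z_{1:j})$ and $p(\cdot\mid z_{1:j-1})$ are mixtures, over the redrawn coordinates, of output densities of $\Alg$ on datasets that differ only in coordinate $j$, so $\epsilon$-DP gives $|\xi_j|\le\epsilon$; and (ii) under $P$, the conditional law of $z_j$ given $\sigma(z_{1:j-1},T)$ is $\Dist$ tilted by $\e^{\xi_j}$, so $\EEE{P}\event{\e^{-\xi_j}\mid z_{1:j-1},T}=1$, which by Jensen forces $\EEE{P}\event{\xi_j\mid z_{1:j-1},T}\ge0$ and, via a second-order expansion of $\log$ using $|\xi_j|\le\epsilon$, bounds this conditional mean by $\epsilon^2/2$. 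Thus $\bigl(\sum_{i\le j}\xi_i\bigr)_j$ is a submartingale under $P$ with increments in $[-\epsilon,\epsilon]$ and compensator at most $\epsilon^2 m/2$; writing it as this deterministically bounded compensator plus a martingale and applying a martingale concentration inequality (Azuma--Hoeffding / Bernstein) to the martingale part, with the usual $\beta$-splitting, recovers the claimed expression. A supremum over $\Dist$ gives $\amaxinfalg{\beta}{\Alg}{m}\le\tfrac{\epsilon^2 m}{2}+\epsilon\sqrt{\tfrac m2\ln\tfrac{2}{\beta}}$.

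\textbf{The main obstacle.} The conceptual crux is that the whole-dataset privacy loss splits into a bounded-increment (sub)martingale whose concentration exploits the i.i.d.\ structure of $S$ (not the mechanism), so the worst case $\epsilon m$ of the pure bound is replaced by one dominated by the $O(\epsilon^2 m)$ \emph{mean}. I expect the real work to be (a) the measure-theoretic bookkeeping that makes every conditional density well defined and the telescoping legitimate --- precisely where $\delta=0$ is essential, through mutual absolute continuity of the output laws --- and (b) tracking constants carefully enough that the concentration step lands exactly on $\tfrac{\epsilon^2 m}{2}+\epsilon\sqrt{(m/2)\ln(2/\beta)}$, which requires using the increment identity $\EEE{P}\event{\e^{-\xi_j}\mid\cdot}=1$ together with $|\xi_j|\le\epsilon$ (not boundedness alone), and a matching $\epsilon^2/2$ bound on the conditional means.
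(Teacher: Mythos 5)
The paper does not actually prove this statement---it is imported verbatim from \citet[Thms.~19--20]{dwork2015generalization}---so the relevant comparison is with the original argument there, and your outline follows it closely: group privacy plus the mixture identity for the pure bound, and, for the approximate bound, the reduction of $\amaxinf{\beta}{S}{\Alg(S)}$ to a tail bound on the privacy loss $\ln \rnd{P}{Q}$, the coordinate-by-coordinate telescoping that exploits the product structure of $\Dist^m$, the per-increment bound $\abs{\xi_j}\le\epsilon$, and the identity $\EE[\e^{-\xi_j}\mid z_{1:j-1},T]=1$ feeding a martingale concentration argument. All of these steps are sound as stated.

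The gap is quantitative but real: the concentration step as you describe it does not land on the stated constant. Knowing only that the increments lie in $[-\epsilon,\epsilon]$ (an interval of length $2\epsilon$), Azuma--Hoeffding gives a deviation term of order $\epsilon\sqrt{2m\ln(1/\beta)}$, a factor of $2$ larger than the claimed $\epsilon\sqrt{m\ln(2/\beta)/2}$; and the identity $\EE[\e^{-\xi_j}\mid\cdot]=1$ does not repair this, since the worst-case law on $\{\pm\epsilon\}$ compatible with it has variance close to $\epsilon^2$. The missing observation is that, conditionally on $(z_{1:j-1},T=t)$, the increment $\xi_j$ in fact ranges over an interval of length at most $\epsilon$ containing $0$: writing $r(z_j)$ for the $\nu$-density of $\Alg$ run on $(z_{1:j-1},z_j)$ with the remaining coordinates redrawn, single-coordinate $\epsilon$-DP gives $r(z_j)\le\e^{\epsilon}r(z_j')$ for all $z_j,z_j'$ (this survives averaging over the redrawn coordinates), and $p(t\mid z_{1:j-1})$ is the $\Dist$-average of $r$, hence lies between $\inf r$ and $\sup r$; so the conditional oscillation of $\xi_j=\ln r(z_j)-\ln\int r\,\dee\Dist$ is at most $\epsilon$, with the interval endpoints measurable with respect to the past. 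With that, Hoeffding's lemma applied to $\EE[\e^{-\xi_j}\mid\cdot]=1$ gives the conditional-mean bound $\EE[\xi_j\mid\cdot]\le\epsilon^2/8\le\epsilon^2/2$, and Azuma with per-step range $\epsilon$ (variance proxy $\epsilon^2/4$) gives exactly the deviation $\epsilon\sqrt{m\ln(2/\beta)/2}$. Relatedly, your ``second-order expansion of $\log$'' for the $\epsilon^2/2$ mean bound is too quick---a naive expansion only yields $\e^{\epsilon}\epsilon^2/2$; the clean routes are either the extremal two-point computation (the maximum of $\EE[\xi]$ subject to $\abs{\xi}\le\epsilon$ and $\EE[\e^{-\xi}]=1$ is $\epsilon\tanh(\epsilon/2)\le\epsilon^2/2$) or the length-$\epsilon$-interval fact just described. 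These constants are not cosmetic here, since they propagate into the $\epsilon^2/2+\epsilon\sqrt{\ln(4/\delta)/(2m)}$ terms of \cref{DPpacbayes}.
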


\section{PAC-Bayes bounds}
\label{dppbproof}

Let $Q,P \in \ProbMeasures{\HH}$.
When $Q$ is absolutely continuous with respect to $P$, written $Q \ll P$,
we write $\rnd{Q}{P} : \HH \to \NNReals \cup \{\infty\}$ for some Radon--Nikodym derivative of $Q$ with respect to $P$.
The Kullback--Liebler (KL) divergence 
from $Q$ to $P$ is 
$\KL{Q}{P} = \int \ln \rnd{Q}{P} \,\dee Q$ if $Q \ll P$ and $\infty$ otherwise.
Let $\Bernoulli{p}$ denote the Bernoulli distribution on $\{0,1\}$ with mean $p$.
For $p,q \in [0,1]$, we abuse notation and define
\begin{equation*}
\KLbin{q}{p} \defas 
\smash{ \KL{\Bernoulli{q}}{\Bernoulli{p}} = q \ln \frac q p + (1-q) \ln \frac {1-q}{1-p}. }
\end{equation*}

The following PAC-Bayes bound for bounded loss is due to \citet{Maurer04},
and extends the 0--1 bound established by \citet{LS01}, building 
off the seminal work of \citet{PACBayes} and \citet{STW97}.
See also \citep{LangfordPHD} and \citep{Catoni}.

\begin{theorem}[PAC-Bayes; {\citealt[Thm.~5]{Maurer04}}]
\label{pacbayes} 
Under bounded loss $\loss \in [0,1]$,
for every $\delta > 0$, $m \in \Nats$, distribution $\Dist$ on $Z$, and distribution $P$ on $\HH$,
\begin{equation}
\PPr{S \sim \Dist^m} 
    \Bigl ( 
      (\forall Q)\ 
      \KLbin{\EmpRisk{S}{Q}}{\Risk{\Dist}{Q}} \le \frac {\KL{Q}{P} + \ln \frac{2 \sqrt{m}}{\delta}}{m}  
    \Bigr ) \ge 1-\delta.
\end{equation}
\end{theorem}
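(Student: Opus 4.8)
The plan is to follow the classical route to Maurer's PAC-Bayes bound, whose only nonroutine ingredient is an exponential-moment estimate refining the Langford--Seeger lemma. The first step is the Donsker--Varadhan change-of-measure inequality: for any measurable $g\colon \HH \to \Reals$ and any $Q \in \ProbMeasures{\HH}$,
\begin{equation*}
\EEE{\ww \sim Q} g(\ww) \le \KL{Q}{P} + \ln \EEE{\ww \sim P} \e^{g(\ww)} .
\end{equation*}
When $Q \not\ll P$, or when $\EEE{\ww \sim P}\e^{g(\ww)} = \infty$, the right-hand side is $+\infty$ and there is nothing to prove; otherwise it follows in one line from the nonnegativity of $\KL{Q}{R}$, where $R$ has $\rnd{R}{P} \propto \e^{g}$. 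Because of the trivial cases, the resulting bound holds for \emph{all} $Q$ simultaneously, as required.

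Second, the exponential-moment lemma. Fix $\ww \in \HH$. Since $\loss(\ww,\cdot)$ takes values in $[0,1]$, for $S \sim \Dist^m$ the quantity $m\,\EmpRisk{S}{\ww}$ is a sum of $m$ i.i.d.\ $[0,1]$-valued random variables with common mean $\Risk{\Dist}{\ww}$, and the estimate behind \citet[Thm.~5]{Maurer04} gives
\begin{equation*}
\EEE{S \sim \Dist^m} \e^{\, m\, \KLbin{\EmpRisk{S}{\ww}}{\Risk{\Dist}{\ww}}} \le 2\sqrt{m} .
\end{equation*}
Set $g(\ww) \defas m\, \KLbin{\EmpRisk{S}{\ww}}{\Risk{\Dist}{\ww}}$, which is jointly measurable in $(\ww,S)$ since $\loss$ is measurable. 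Integrating first over $\ww \sim P$ (Tonelli; the integrand is nonnegative) and using the lemma gives $\EEE{S \sim \Dist^m}\EEE{\ww\sim P}\e^{g(\ww)} \le 2\sqrt m$, so Markov's inequality yields: with probability at least $1-\delta$ over $S$,
\begin{equation*}
\EEE{\ww \sim P} \e^{\, m\, \KLbin{\EmpRisk{S}{\ww}}{\Risk{\Dist}{\ww}}} \le \frac{2\sqrt{m}}{\delta} .
\end{equation*}

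Third, on that $(1-\delta)$-probability event I would combine the two displays, taking $g$ as above: for every $Q$,
\begin{equation*}
m\, \EEE{\ww\sim Q} \KLbin{\EmpRisk{S}{\ww}}{\Risk{\Dist}{\ww}} \le \KL{Q}{P} + \ln \frac{2\sqrt m}{\delta} .
\end{equation*}
Then, since $(q,p) \mapsto \KLbin{q}{p} = \KL{\Bernoulli{q}}{\Bernoulli{p}}$ is jointly convex and $q \mapsto \Bernoulli q$ is affine, Jensen's inequality applied to $\ww \sim Q$ gives $\KLbin{\EEE{\ww\sim Q}\EmpRisk{S}{\ww}}{\EEE{\ww\sim Q}\Risk{\Dist}{\ww}} \le \EEE{\ww\sim Q}\KLbin{\EmpRisk{S}{\ww}}{\Risk{\Dist}{\ww}}$; and by the definition of the risk of a randomized predictor, $\EEE{\ww\sim Q}\EmpRisk{S}{\ww} = \EmpRisk{S}{Q}$ and $\EEE{\ww\sim Q}\Risk{\Dist}{\ww} = \Risk{\Dist}{Q}$. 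Dividing the resulting chain by $m$ gives exactly the claimed inequality.

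The main obstacle is the exponential-moment lemma, and in particular its validity for general $[0,1]$-valued losses rather than $\{0,1\}$-valued ones. For Bernoulli summands the left-hand side equals $\sum_{k=0}^{m}\binom{m}{k}p^{k}(1-p)^{m-k}\,\e^{\, m\, \KLbin{k/m}{p}}$, and a direct estimate bounds this by $2\sqrt m$ — this is the step that produces the exponent $\tfrac12$ and the constant $2$, improving earlier $\mathcal O(m)$-type bounds. To reduce the $[0,1]$ case to this one, one fixes the mean $a = \Risk{\Dist}{\ww}$ of each summand and uses convexity of the relevant functional in each of its arguments to conclude that the two-point law on $\{0,1\}$ with mean $a$ maximizes the exponential moment, so the Bernoulli bound dominates. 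In the write-up I would either cite this reduction from \citet{Maurer04} or reproduce the short convexity argument; everything else is routine.
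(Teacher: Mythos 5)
Your proposal is correct: the paper does not prove this theorem at all (it is quoted from Maurer, Thm.~5, as a known result), and your argument is essentially Maurer's own — Donsker--Varadhan change of measure, the exponential-moment bound $\EE\, \e^{m\,\KLbin{\EmpRisk{S}{\ww}}{\Risk{\Dist}{\ww}}} \le 2\sqrt m$ with the convexity reduction from $[0,1]$-valued to Bernoulli losses, Tonelli plus Markov, and joint convexity of $\KLnamebin$ to pass to the randomized predictor, with the uniformity in $Q$ correctly coming from the fact that the high-probability event depends on $S$ alone. The only caveat you inherit is from the cited lemma itself: Maurer states the $2\sqrt m$ moment bound for $m \ge 8$ (as does his Thm.~5), whereas the paper writes ``for every $m \in \Nats$''; this is a discrepancy in the quoted statement, not a gap in your argument.
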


One can use Pinsker's inequality to obtain a bound on the generalization error $\abs{ \EmpRisk{S}{Q} - \Risk{\Dist}{Q}}$, however this 
significantly loosens the bound, especially when $\EmpRisk{S}{Q}$ is close to zero. 
We refer to the quantity $\KLbin{\EmpRisk{S}{Q}}{\Risk{\Dist}{Q}}$ as the \defn{KL-generalization error}.
From a bound on this quantity, we can bound the risk as follows:
given empirical risk $q$ and a bound on the KL-generalization error $c$, 
the risk is bounded by the largest value $p \in [0,1]$ such  $ \KLbin{q}{p} \le c$. 
See \citep{DR17} for a discussion of this computation.
When the empirical risk is near zero, the KL-generalization error is essentially the generalization error. 
 As empirical risk increases, the bound loosens and the square root of the KL-generalization error bounds the generalization error.

\subsection{Data-dependent priors}
\label{ddpsec}

The prior $P$ that appears in the PAC-Bayes generalization bound must be chosen independently of the data $S \sim \Dist^m$, 
but can depend on the data distribution $\Dist$ itself.
If a data-dependent prior $\PAlg(S)$
does not depend too much on any individual data point, it should behave as if it depends only on the distribution.
\cref{dpthm} allows us to formalize this intuition: we can obtain new PAC-Bayes bounds that use data-dependent priors, provided they are $\epsilon$-differentially private:
We provide an example using the bound of Maurer (\cref{pacbayes}).

\begin{theorem}[PAC-Bayes with private data-dependent priors]\label{DPpacbayes}
Fix a bounded loss $\loss \in [0,1]$.
Let $m \in \Nats$, 
let $\PAlg \colon Z^m \randto \ProbMeasures{\HH}$ be an $\epsilon$-differentially private mechanism for choosing a data-dependent prior,
let $\Dist \in \ProbMeasures{Z}$,
and let $S \sim \Dist^m$.
Then, 
with probability at least $1-\delta$,
\begin{equation}\label{DPpacbound}
      \forall Q \in \ProbMeasures{\HH},\ 
      \KLbin{\EmpRisk{S}{Q}}{\Risk{\Dist}{Q}} 
      \le \frac {\KL{Q}{\PAlg(S)} + \ln \frac{4 \sqrt{m}}{\delta} }{m} + \epsilon^2/2 + \epsilon \sqrt{\frac{\ln(4/\delta)}{2m}} . 
\end{equation}
\end{theorem}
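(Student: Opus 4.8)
The plan is to run Maurer's PAC-Bayes inequality (\cref{pacbayes}) as though the prior were fixed in advance, and then ``pay'' for the data-dependence of $\PAlg(S)$ using the approximate max-information bound of \cref{dpthm}. First I would fix an auxiliary confidence level $\delta' > 0$ (to be optimized at the end) and introduce the ``bad'' event
\begin{equation*}
E = \set[\Big]{(s,P) \in Z^m \times \ProbMeasures{\HH} : \exists Q,\ \KLbin{\EmpRisk{s}{Q}}{\Risk{\Dist}{Q}} > \tfrac1m\parens[\big]{\KL{Q}{P} + \ln\tfrac{2\sqrt m}{\delta'}}}.
\end{equation*}
This set (which also depends on $\Dist$ and $\delta'$) is product-measurable: the existential over $Q$ can be handled by passing to the extremal $Q$ realizing $\sup_Q\set{\KLbin{\EmpRisk{s}{Q}}{\Risk{\Dist}{Q}} - \KL{Q}{P}/m}$, or via a countable dense family of $Q$'s. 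By \cref{pacbayes}, applied with the fixed prior $P$ and confidence $1-\delta'$, we have $\Pr\set{(S',P)\in E} \le \delta'$ for \emph{every} fixed $P \in \ProbMeasures{\HH}$, where $S' \sim \Dist^m$.

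Next I would decouple. Let $S \sim \Dist^m$, set $Y = \PAlg(S)$, and let $S' \sim \Dist^m$ be independent of $Y$, as in the definition of max-information. Conditioning on the value of $Y$ and integrating the bound from the previous step,
\begin{equation*}
\Pr\set{(S',Y)\in E} = \EEE{P\sim Y}\brackets[\big]{\,\PPr{S'\sim\Dist^m}\set{(S',P)\in E}\,} \le \delta'.
\end{equation*}
Now apply the definition of $\beta$-approximate max-information with $X = S$ and $Y = \PAlg(S)$ to the product-measurable event $E$: writing $k = \amaxinfalg{\beta}{\PAlg}{m}$, this gives $\Pr\set{(S,\PAlg(S))\in E} \le \e^{k}\,\Pr\set{(S',\PAlg(S))\in E} + \beta \le \e^{k}\delta' + \beta$, and since $\PAlg$ is $\epsilon$-differentially private, \cref{dpthm} lets us take $k \le \epsilon^2 m/2 + \epsilon\sqrt{m\ln(2/\beta)/2}$.

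It then remains to choose the free parameters. Taking $\beta = \delta/2$ and $\delta' = \tfrac{\delta}{2}\e^{-k}$ makes the failure probability at most $\e^{k}\cdot\tfrac{\delta}{2}\e^{-k} + \tfrac{\delta}{2} = \delta$, so with probability at least $1-\delta$ the pair $(S,\PAlg(S))$ lies outside $E$; that is, simultaneously for all $Q$,
\begin{equation*}
\KLbin{\EmpRisk{S}{Q}}{\Risk{\Dist}{Q}} \le \frac{\KL{Q}{\PAlg(S)} + \ln\frac{2\sqrt m}{\delta'}}{m} = \frac{\KL{Q}{\PAlg(S)} + \ln\frac{4\sqrt m}{\delta}}{m} + \frac km.
\end{equation*}
Substituting $\beta = \delta/2$ (so $\ln(2/\beta) = \ln(4/\delta)$) into the bound $k \le \epsilon^2 m/2 + \epsilon\sqrt{m\ln(2/\beta)/2}$ and dividing by $m$ gives $k/m \le \epsilon^2/2 + \epsilon\sqrt{\ln(4/\delta)/(2m)}$, which is exactly \eqref{DPpacbound}.

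The one genuinely delicate point is the interface between the two ingredients rather than any single hard estimate: one must recognize that a PAC-Bayes guarantee proved for an \emph{arbitrary fixed} prior is precisely a bound on $\Pr\set{(S',Y)\in E}$ for the \emph{decoupled} pair $(S', \PAlg(S))$ --- this is what licenses bringing max-information to bear --- and one must verify the product-measurability of $E$ so that the max-information inequality applies legitimately. Everything after that (tracking the logarithmic and $\epsilon$-dependent constants, the choice of $\beta$ and $\delta'$) is routine bookkeeping.
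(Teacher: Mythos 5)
Your proposal is correct and follows essentially the same route as the paper's own proof: apply Maurer's bound to every fixed prior to control the decoupled event, invoke the definition of $\beta$-approximate max-information together with \cref{dpthm} to transfer the guarantee to the coupled pair $(S,\PAlg(S))$, and then set $\beta=\delta/2$ with $\delta'=\tfrac{\delta}{2}\e^{-k}$, exactly matching the paper's choice. The only cosmetic difference is that you make the decoupling/conditioning step and the product-measurability of the bad event explicit, which the paper leaves implicit.
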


See \cref{proofofDPpacbayes} for a proof of a more general statement of the theorem and further discussion.
The main innovation here is recognizing the potential to choose data-dependent priors using private mechanisms.
The hard work is done by \cref{dpthm}: obtaining differentially private versions of other PAC-Bayes bounds is straightforward. 

When one is choosing the privacy parameter, $\epsilon$,
there is a balance between 
minimizing the direct contributions of $\epsilon$ to the bound (forcing $\epsilon$ smaller) and 
minimizing the indirect contribution of $\epsilon$ through the KL term for posteriors Q that have low empirical risk (forcing $\epsilon$ larger). 
The optimal value for $\epsilon$ is often much less than one, which can be challenging to obtain. 
We discuss strategies for achieving the required privacy in later sections.

\section{Weak approximations to $\epsilon$-differentially private priors}

\cref{DPpacbayes} permits data-dependent priors that are chosen by $\epsilon$-differentially private mechanisms.
In this section, we discuss concrete families of priors and mechanisms for choosing among them in data-dependent ways. 
We also relax \cref{DPpacbayes} to allow non-private priors.

We apply our main result to non-private data-dependent Gaussian priors with a fixed covariance matrix. %
Thus, we choose only the mean $\ww_0 \in \HH$ privately in a data-dependent way.
We show that it suffices for the data-dependent mean to be merely close in 2-Wasserstein
distance to a private mean to yield a generalization bound.
(It is natural to consider also choosing a data-dependent covariance,
 but as it is argued below, the privacy budget
we have in applications to generalization is very small.)

Ideally, we would choose a mean vector $\ww_0$ that leads to a tight bound. 
A reasonable approach is to choose $\ww_0$ by approximately minimizing the empirical risk $\EmpRisk{S}{}$ or surrogate risk $\SurRisk{S}{}$,
subject to privacy constraints. A natural way to do this is via an exponential mechanism.
We pause to introduce some notation for Gibbs distributions:
For a measure $P$ on $\HH$ and measurable function $g : \HH \to \Reals$, let $P[g]$ denote the expectation $\int g(h) P(\dee h)$ and, provided $P[g] < \infty$, let $P_{g}$ denote the probability measure on $\HH$, absolutely continuous with respect to $P$, with Radon--Nikodym derivative 
$
\frac{\dee P_{g}}{\dee P}(h) = 
\frac{g(h)}{P[g]}.
$
A distribution of the form $\Gibbs{P}{- \tau g }$  is generally referred to as a Gibbs distribution with energy function $g$ and inverse temperature $\tau$.
In the special case where $P$ is a probability measure, 
we call $\Gibbs{P}{-\tau \SurRisk{S}{} }$ a ``Gibbs posterior''.

\begin{lemma}[{\citealt[Thm.~6]{ExpRelease07}}]\label{asdfexrelease}
Let $ q : Z^m \times \HH \to \Reals$ be measurable, let $P$ be a $\sigma$-finite measure on $\HH$,
let $\beta > 0$,
and assume $P[\exp \parens {-\beta\, q(S,\cdot)}] < \infty$ for all $S \in Z^m$.
Let $\Delta q \defas \sup_{S,S'} \sup_{\ww \in \HH} | q(S,\ww) - q(S',\ww) |$, 
where the first supremum ranges over pairs $S,S' \in Z^m$ that disagree on no more than one coordinate.
Let $\Alg : Z^m \randto \HH$, on input $S \in Z^m$,
output a sample from the Gibbs distribution $\Gibbs{P}{-\beta q(S,\cdot)}$.
Then $\Alg$ is $2\beta \Delta q$-differentially private.
\end{lemma}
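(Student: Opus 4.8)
The plan is to write the law of $\Alg(S)$ explicitly and compare it pointwise with the law of $\Alg(S')$ for a neighbouring input; this is the classical exponential-mechanism argument. Fix neighbouring datasets $S,S'\in Z^m$ (agreeing in all but at most one coordinate) and a measurable set $B\subseteq\HH$. Unwinding the definition of the Gibbs distribution,
\[
\Pr\event{\Alg(S)\in B} = \Gibbs{P}{-\beta q(S,\cdot)}(B) = \frac{\int_B \exp\parens{-\beta q(S,\ww)}\,P(\dee\ww)}{P\brackets{\exp\parens{-\beta q(S,\cdot)}}},
\]
and likewise with $S'$ in place of $S$. Since $q$ is real-valued, the integrand $\exp\parens{-\beta q(S,\cdot)}$ is strictly positive, so (assuming $P$ is not the zero measure, in which case the statement is vacuous) the hypothesis $P\brackets{\exp\parens{-\beta q(S,\cdot)}}<\infty$ upgrades to $0<P\brackets{\exp\parens{-\beta q(S,\cdot)}}<\infty$ and both ratios are well defined. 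Joint measurability of $q$ together with Tonelli's theorem (applied to the non-negative, jointly measurable map $(S,\ww)\mapsto \bone_B(\ww)\exp\parens{-\beta q(S,\ww)}$ and the $\sigma$-finite $P$) makes $S\mapsto\Pr\event{\Alg(S)\in B}$ measurable, so $\Alg$ is a bona fide randomized algorithm $Z^m\randto\HH$.

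The only substantive step is a pointwise bracketing of the unnormalised densities. By definition of $\Delta q$ we have $\abs{q(S,\ww)-q(S',\ww)}\le\Delta q$ for every $\ww\in\HH$, hence
\[
\e^{-\beta\Delta q}\,\exp\parens{-\beta q(S',\ww)} \le \exp\parens{-\beta q(S,\ww)} \le \e^{\beta\Delta q}\,\exp\parens{-\beta q(S',\ww)}
\]
for all $\ww$. Using the right-hand inequality (integrated over $B$) to bound the numerator of $\Pr\event{\Alg(S)\in B}$ from above, and the left-hand inequality (integrated over $\HH$) to bound its denominator from below, then dividing, gives
\[
\Pr\event{\Alg(S)\in B}
&\le \e^{2\beta\Delta q}\,\frac{\int_B \exp\parens{-\beta q(S',\ww)}\,P(\dee\ww)}{P\brackets{\exp\parens{-\beta q(S',\cdot)}}} \\
&= \e^{2\beta\Delta q}\,\Pr\event{\Alg(S')\in B}.
\]
Since $S$, $S'$, and $B$ were arbitrary, this is exactly the statement that $\Alg$ is $(2\beta\Delta q,0)$-differentially private.

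I do not anticipate any real obstacle: once the normalising constant is carried along, the proof is the one-line ratio manipulation above. The most delicate points are merely technical — the degenerate cases ($\Delta q=\infty$ makes the claimed bound vacuous, and $P\equiv 0$ makes $\Alg$ undefined and the claim empty, so throughout one may assume $0<P\brackets{\exp\parens{-\beta q(S,\cdot)}}<\infty$ for all $S$), and the measurability of $S\mapsto\Gibbs{P}{-\beta q(S,\cdot)}(B)$ needed for $\Alg$ to be a well-defined Markov kernel. I would either fold these into the proof as indicated or isolate them in a short remark.
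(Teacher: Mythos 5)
Your proof is correct: the pointwise bracketing $\abs{q(S,\ww)-q(S',\ww)}\le\Delta q$ applied once to the unnormalised density over $B$ and once to the normalising constant, each contributing a factor $\e^{\beta\Delta q}$, is exactly the standard exponential-mechanism ratio argument. The paper does not prove this lemma itself but cites it (McSherry--Talwar, Thm.~6 of the referenced work), whose proof is this same manipulation, so your argument matches the intended one, and your extra remarks on positivity of the normaliser, the degenerate cases, and measurability of $S\mapsto\Pr\event{\Alg(S)\in B}$ are sound.
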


The following result is a straightforward application of \cref{asdfexrelease} %
and essentially equivalent results have appeared in numerous studies of the differential privacy of Bayesian and Gibbs posteriors
\citep{mir2013differential, bassily2014differentially, dimitrakakis2014robust, Wang:2015,Minami16}.

\begin{corollary}\label{dpGibbspSample}
Let $\tau > 0$ and let $\SurRisk{S}{}$ denote the surrogate risk, taking values in an interval of length $\Delta$.
One sample from the Gibbs posterior $P_{\exp \parens{-\tau \SurRisk{S}{}}}$ 
is $\frac{2 \tau \Delta}{m}$-differentially private.
\end{corollary}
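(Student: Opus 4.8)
The plan is to instantiate \cref{asdfexrelease} with energy function $q(S,\ww) \defas \SurRisk{S}{\ww}$ and inverse temperature $\beta \defas \tau$, keeping the same base probability measure $P$ on $\HH$. The algorithm $\Alg$ that, on input $S$, samples from $\Gibbs{P}{-\tau\,\SurRisk{S}{\cdot}} = P_{\exp\parens{-\tau \SurRisk{S}{}}}$ is then, by the lemma, $2\tau\,\Delta q$-differentially private, where $\Delta q$ is the coordinate-wise sensitivity of $S \mapsto \SurRisk{S}{\cdot}$. So the whole argument reduces to (i) checking the lemma's hypotheses and (ii) showing $\Delta q \le \Delta/m$.

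For (i): measurability of $q$ is inherited from measurability of the surrogate loss; and since $\SurRisk{S}{}$ is an empirical average of a surrogate loss whose range is contained in an interval of length $\Delta$, it is bounded, so $\exp\parens{-\tau\,\SurRisk{S}{\cdot}}$ is a bounded measurable function and $P[\exp\parens{-\tau\,\SurRisk{S}{\cdot}}] < \infty$ because $P$ is a probability measure --- more than the $\sigma$-finiteness the lemma requires. For (ii): writing $\SurRisk{S}{\ww} = \frac 1 m \sum_{i=1}^m \surloss(\ww,z_i)$ with $\surloss$ taking values in an interval $I$ of length $\Delta$, if $S,S' \in Z^m$ disagree only in coordinate $j$ then
\[
\abs{\SurRisk{S}{\ww} - \SurRisk{S'}{\ww}} = \frac 1 m \abs{\surloss(\ww,z_j) - \surloss(\ww,z_j')} \le \frac{\Delta}{m}
\]
for every $\ww \in \HH$, since both loss values lie in $I$. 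Taking suprema over such pairs $S,S'$ and over $\ww$ gives $\Delta q \le \Delta/m$.

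Combining, $\Alg$ is $2\tau\,\Delta q$-differentially private with $2\tau\,\Delta q \le 2\tau\Delta/m$, and since any $\epsilon$-differentially private mechanism is a fortiori $\epsilon'$-differentially private for every $\epsilon' \ge \epsilon$, one sample from $P_{\exp\parens{-\tau \SurRisk{S}{}}}$ is $\frac{2\tau\Delta}{m}$-differentially private. I do not expect any genuine obstacle: the corollary is a direct specialization of \cref{asdfexrelease}, and the only thing to be careful about is bookkeeping --- that the $1/m$ normalization built into the definition of $\SurRisk{S}{}$ is precisely what supplies the $1/m$ in the sensitivity, and that the $\Delta$ in the statement is exactly the length of the range interval of the surrogate loss used to define $\SurRisk{S}{}$.
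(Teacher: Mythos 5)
Your proof is correct and follows exactly the route the paper intends: the corollary is stated as a direct specialization of \cref{asdfexrelease} with $q(S,\cdot) = \SurRisk{S}{}$ and $\beta = \tau$, where the sensitivity bound $\Delta q \le \Delta/m$ comes from the $1/m$ normalization of the empirical surrogate risk, giving privacy parameter $2\tau\Delta/m$. Your added checks of measurability and finiteness of the normalizing constant are fine and only make explicit what the paper leaves implicit.
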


\subsection{Weak convergence yields valid PAC-Bayes bounds} \label{weakconvsec}

Even for small values of the inverse temperature, 
it is difficult to implement the exponential mechanism because sampling 
from Gibbs posteriors exactly is intractable. 
On the other hand, a number of algorithms exist for generating approximate samples from Gibbs posteriors.
If one can control the total-variation distance, one can obtain a bound like \cref{DPpacbayes}  by 
applying approximate max-information bounds for $(\epsilon,\delta)$-differential private mechanisms.
However, many algorithms do not control the total-variation distance to stationarity, or do so poorly.

The generalization properties of randomized classifiers are generally insensitive to small variations of the parameters, however,
and so it stands to reason that our data-dependent prior need only be itself close to an $\epsilon$-differentially private prior.
We formalize this intuition here by deriving bounds on $\KL{Q}{\PAlg(S)}$ in terms of a non-private data-dependent prior $P^S$.
We start with an identity:

\begin{lemma}\label{KLlemma}
If $P' \ll P$ then
$\KL{Q}{P} = \KL{Q}{P'} + Q \brackets{\ln \rnd{P'}{P} }$.
\end{lemma}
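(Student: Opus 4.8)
The plan is to unfold the definition of the KL divergence as an expectation of a log-density ratio, and then split the ratio $\rnd{Q}{P}$ into two factors using the chain rule for Radon--Nikodym derivatives. First I would observe that the claimed identity is only meaningful when $Q \ll P'$ (otherwise both sides are $+\infty$, since $P' \ll P$ forces $Q \not\ll P$ as well, and the correction term $Q[\ln \rnd{P'}{P}]$ is harmless), so I may assume $Q \ll P'$; combined with $P' \ll P$ this gives $Q \ll P$, so all three divergences in the statement are honest integrals.

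Next I would write, $P'$-almost everywhere and hence $Q$-almost everywhere,
\[
\rnd{Q}{P} = \rnd{Q}{P'} \cdot \rnd{P'}{P},
\]
which is the standard chain rule for Radon--Nikodym derivatives along the tower $Q \ll P' \ll P$. Taking logarithms (valid $Q$-a.e.\ since $\rnd{Q}{P'} > 0$ $Q$-a.e.\ and $\rnd{P'}{P} > 0$ on the support of $P'$, which carries $Q$) yields
\[
\ln \rnd{Q}{P} = \ln \rnd{Q}{P'} + \ln \rnd{P'}{P}
\]
$Q$-almost everywhere. Integrating both sides against $Q$ and using the definition $\KL{Q}{P} = \int \ln \rnd{Q}{P}\,\dee Q$ (and likewise for $\KL{Q}{P'}$) gives
\[
\KL{Q}{P} = \KL{Q}{P'} + Q\brackets{\ln \rnd{P'}{P}},
\]
which is exactly the claim.

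The only real subtlety — and the step I would be most careful about — is integrability of the correction term $Q[\ln \rnd{P'}{P}]$: one must argue that this integral is well-defined in $(-\infty, +\infty]$ so that the additive identity makes sense and one is not subtracting $\infty$ from $\infty$. This follows because $\KL{Q}{P} \ge -\ln 1 = 0$ is bounded below and $\KL{Q}{P'} \in [0,\infty)$ under the assumption $Q \ll P'$ (with the degenerate case $\KL{Q}{P'} = \infty$ handled separately, both sides being $\infty$); more directly, $\int (\ln \rnd{P'}{P})^{-}\,\dee Q < \infty$ because the negative part is controlled by $\int \rnd{P'}{P}\,\dee Q$-type bounds, or one simply notes $Q[\ln \rnd{P'}{P}] = \KL{Q}{P} - \KL{Q}{P'}$ is a difference of a quantity in $[0,\infty]$ and one in $[0,\infty)$. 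No convexity or information-theoretic inequality is needed — it is a pure change-of-measure bookkeeping argument.
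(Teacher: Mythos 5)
Your argument is the same as the paper's: reduce to the case $Q \ll P'$, use the Radon--Nikodym chain rule $\rnd{Q}{P} = \rnd{Q}{P'}\,\rnd{P'}{P}$ along $Q \ll P' \ll P$, take logarithms $Q$-a.e., and integrate against $Q$; this is correct and matches the paper's proof step for step, and your extra remark on integrability of the correction term is a reasonable addition the paper omits. One parenthetical claim is false, however: $Q \not\ll P'$ together with $P' \ll P$ does \emph{not} force $Q \not\ll P$ (take $P$ Lebesgue on $[0,1]$, $P'$ uniform on $[0,\tfrac12]$, $Q$ uniform on $[\tfrac12,1]$: then $\KL{Q}{P}$ is finite while $\KL{Q}{P'} = \infty$ and $Q\brackets{\ln \rnd{P'}{P}} = -\infty$, so the right-hand side is ill-defined rather than ``both sides $+\infty$''). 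This does not affect the main case, and the paper likewise dismisses the degenerate case in one clause (reading the identity as the trivial bound when $\KL{Q}{P'}=\infty$), but your stated justification for it should be dropped or corrected.
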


The proof is straightforward. The lemma highlights the role of $Q$ in judging the difference between $P'$ and $P$, and leads immediately to the following corollary (see \cref{proofofweakconv}).

\begin{lemma}[Non-private priors]
\label{nonprivate}
Let $m \in \Nats$, 
let $\PAlg \colon Z^m \randto \ProbMeasures{\HH}$ be $\epsilon$-differentially private,
let $\Dist \in \ProbMeasures{Z}$, 
let $S \sim \Dist^m$,
and let $P^S$ be a data-dependent prior such that, 
for some $P^*(S)$ satisfying $\Pr[P^*(S)|S] = \Pr[\PAlg(S)|S]$, we have
$P^S \ll P^*(S)$ with probability at least $1-\delta'$.
Then, with probability at least $1-\delta-\delta'$,
\cref{DPpacbound} holds with $\KL{Q}{\PAlg(S)}$ replaced by $\KL{Q}{P^S} + Q \brackets{\ln \rnd{P^S}{P^*(S)} } $.
\end{lemma}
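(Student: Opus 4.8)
The plan is to combine \cref{DPpacbayes}, applied to the ``reference'' private mechanism $P^*$, with the change-of-prior identity of \cref{KLlemma}, and then union-bound the two exceptional events. For $S\sim\Dist^m$, the conclusion of \cref{DPpacbayes} applied to $\PAlg$ is that $(S,\PAlg(S))$ lands with probability at least $1-\delta$ in a fixed measurable set $E\subseteq Z^m\times\ProbMeasures{\HH}$ — namely the set of pairs $(S,P)$ for which \cref{DPpacbound} holds with $P$ in place of $\PAlg(S)$ — where $E$ depends only on $\Dist,m,\delta,\epsilon$. The hypothesis $\Pr[P^*(S)\mid S]=\Pr[\PAlg(S)\mid S]$ says that $P^*$ and $\PAlg$ are the same Markov kernel on $\Dist^m$-a.e.\ input, so $(S,P^*(S))\eqdist(S,\PAlg(S))$ and hence $\Pr[(S,P^*(S))\in E]\ge 1-\delta$ as well; equivalently, $P^*$ is itself $\epsilon$-differentially private, so \cref{DPpacbayes} applies to it verbatim. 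Call this event $A$: on $A$, for every $Q$,
\[
\KLbin{\EmpRisk{S}{Q}}{\Risk{\Dist}{Q}}\le\frac{\KL{Q}{P^*(S)}+\ln\frac{4\sqrt m}{\delta}}{m}+\frac{\epsilon^2}{2}+\epsilon\sqrt{\frac{\ln(4/\delta)}{2m}}.
\]

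Next, let $B=\set{P^S\ll P^*(S)}$, which by assumption has probability at least $1-\delta'$. On $B$, \cref{KLlemma} with $P=P^*(S)$ and $P'=P^S$ gives, for every $Q$,
\[
\KL{Q}{P^*(S)}=\KL{Q}{P^S}+Q\brackets[\big]{\ln\rnd{P^S}{P^*(S)}},
\]
a genuine equality (with both sides $+\infty$ when $Q\not\ll P^*(S)$, using $P^S\ll P^*(S)$), so the two descriptions of the KL penalty are literally the same number and no separate treatment of vacuous cases is needed. On $A\cap B$, which by the union bound has probability at least $1-\delta-\delta'$, substituting this identity into the displayed bound on $A$ yields exactly \cref{DPpacbound} with $\KL{Q}{\PAlg(S)}$ replaced by $\KL{Q}{P^S}+Q\brackets[\big]{\ln\rnd{P^S}{P^*(S)}}$, which is the claim.

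The only delicate point is the transport step: one must confirm that ``equal conditional law given $S$'' really does carry the probability guarantee of \cref{DPpacbayes} from $\PAlg$ to $P^*$. This is immediate once one notes that the event in \cref{DPpacbound} is a measurable function of the pair $(\text{data},\text{prior})$ alone — the same (implicit) measurability already used in stating \cref{DPpacbayes} — together with the fact that $P^*$, sharing the input--output conditional law of $\PAlg$, satisfies the same differential-privacy inequality. Everything after that is a direct substitution, and the more general form referred to as \cref{DPpacbound} (the version with the refined approximate-max-information terms) can be plugged in in place of \cref{pacbayes}'s specialization with no change to the argument.
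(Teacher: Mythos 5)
Your proof is correct and follows essentially the same route as the paper's: apply \cref{DPpacbayes} to the private surrogate $P^*(S)$, invoke \cref{KLlemma} on the event $\set{P^S \ll P^*(S)}$, and union-bound the two failure events. Your extra care in the transport step (arguing via $(S,P^*(S)) \eqdist (S,\PAlg(S))$ that the high-probability guarantee carries over) only makes explicit what the paper asserts directly by treating $P^*(S)$ as $\epsilon$-differentially private.
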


The conditions that $P^S \ll P^*(S)$ and
$
Q \brackets{\ln \rnd{P^S}{P^*(S)} }  \leq \infty
$
for some $Q$ 
do not constrain $P^S$ to be differentially private. 
In fact, $S$ could be $P^S$-measurable!

\cref{nonprivate} is not immediately applicable because $P^*(S)$ is intractable to generate.
The following application considers multivariate Gaussian priors,
$N(w)$, indexed by their mean vectors $w \in \HH$, 
 with a fixed positive definite covariance matrix $\Sigma \in \Reals^{p \times p}$.
We require two technical results: 
The first implies that we
can bound $Q \brackets{\ln \rnd{P^S}{P^*(S)} }$
if $Q$ concentrates near the non-private mean;
The second characterizes this concentration for Gibbs posteriors built from bounded surrogate risks.
\begin{lemma}\label{normboundRN}
$
Q \brackets {\ln \rnd{N(w')}{N(w)}} 
\le   \frac 1 2 \norm{w-w'}^2_{\SigmaInv} +  \norm{w-w'}_{\SigmaInv} \, \, \EEE{v \sim Q}\norm{v-w'}_{\SigmaInv} .
$
\end{lemma}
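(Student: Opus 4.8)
The plan is to write the integrand $\ln \rnd{N(w')}{N(w)}$ explicitly as an affine function of the integration variable, integrate it against $Q$, and then control the single cross term that appears by the Cauchy--Schwarz inequality in the inner product induced by $\SigmaInv$.

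First, since $N(w)$ and $N(w')$ share the covariance $\Sigma$, they are mutually absolutely continuous, and for every $v \in \HH$
\[
\ln \rnd{N(w')}{N(w)}(v) = \tfrac12 (v-w)^{\top}\SigmaInv(v-w) - \tfrac12 (v-w')^{\top}\SigmaInv(v-w').
\]
I would then substitute $v - w = (v-w') - (w-w')$, expand the first quadratic form, and cancel the $(v-w')^{\top}\SigmaInv(v-w')$ terms, which leaves the identity
\[
\ln \rnd{N(w')}{N(w)}(v) = \tfrac12 \norm{w-w'}_{\SigmaInv}^2 - (v-w')^{\top}\SigmaInv(w-w') ,
\]
so that, integrating against $Q$,
\[
Q\brackets{\ln \rnd{N(w')}{N(w)}} = \tfrac12 \norm{w-w'}_{\SigmaInv}^2 - \EEE{v\sim Q}(v-w')^{\top}\SigmaInv(w-w') .
\]

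Second, I would bound the cross term. For each fixed $v$, Cauchy--Schwarz for the inner product $\langle a,b\rangle_{\SigmaInv} = a^{\top}\SigmaInv b$ gives $-(v-w')^{\top}\SigmaInv(w-w') \le \norm{v-w'}_{\SigmaInv}\,\norm{w-w'}_{\SigmaInv}$; taking $\EEE{v\sim Q}$ of both sides and pulling the deterministic factor $\norm{w-w'}_{\SigmaInv}$ out of the expectation yields
\[
- \EEE{v\sim Q}(v-w')^{\top}\SigmaInv(w-w') \le \norm{w-w'}_{\SigmaInv}\,\EEE{v\sim Q}\norm{v-w'}_{\SigmaInv},
\]
and combining this with the previous display gives the claim.

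There is essentially no obstacle here: the statement is the elementary algebra of Gaussian log-density ratios together with one Cauchy--Schwarz step, and $Q$ is entirely arbitrary (it need not be Gaussian, nor need it be related to $N(w)$ or $N(w')$ in any way). The only point worth a remark is well-definedness of the integral: if $\EEE{v\sim Q}\norm{v-w'}_{\SigmaInv} = \infty$ the right-hand side is $+\infty$ and there is nothing to prove, whereas otherwise the linear term $(v-w')^{\top}\SigmaInv(w-w')$ is $Q$-integrable, the identity in the third display is valid, and the argument goes through as stated.
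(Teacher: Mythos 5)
Your proof is correct and matches the paper's argument essentially step for step: both expand the log ratio of the two Gaussian densities (same covariance) into the exact identity $\tfrac12\norm{w-w'}^2_{\SigmaInv}$ plus a cross term linear in $v-w'$, bound the cross term by Cauchy--Schwarz in the $\SigmaInv$ inner product, and take the expectation over $v\sim Q$. Your added remark on integrability when $\EEE{v\sim Q}\norm{v-w'}_{\SigmaInv}=\infty$ is a harmless extra precision not present in the paper.
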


\begin{lemma}
\label{GibbsboundRN}
Let $P=N(w)$ and $Q = \Gibbs{P}{h}$ for $h \ge 0$.
Then
$
\EEE{v \sim Q}\norm{v-w}_{\SigmaInv}  \le  \sqrt{2 \smash{\Lpnorm{\infty}{P}{h}} } + \sqrt{2/\pi}.
$
\end{lemma}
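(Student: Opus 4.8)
The plan is to reduce the claim to a statement about a standard Gaussian and then to trade the divergence budget of $Q$ against a Gaussian exponential moment, rather than bounding the density $\rnd{Q}{P}$ pointwise --- the naive estimate $\rnd{Q}{P} \le \e^{\Lpnorm{\infty}{P}{h}}$ would inflate $\EEE{v \sim P}\norm{v-w}_{\SigmaInv}$ by the useless factor $\e^{\Lpnorm{\infty}{P}{h}}$. Write $f(v) \defas \norm{v-w}_{\SigmaInv}$ and $\mu \defas \EEE{v \sim P}\brackets{f(v)}$. The substitution $u = \Sigma^{-1/2}(v-w)$ shows that under $P = N(w)$ one has $u \sim N(0,I_p)$ and $f$ corresponds to the $1$-Lipschitz map $u \mapsto \norm{u}_2$; in particular $\mu = \EEE{u \sim N(0,I_p)}\norm{u}_2$ is the mean of a chi (folded-normal) variate, which equals $\sqrt{2/\pi}$ in the scalar case. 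It therefore suffices to prove $\EEE{v \sim Q}\brackets{f(v)} \le \mu + \sqrt{2\,\Lpnorm{\infty}{P}{h}}$.

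The argument uses two estimates. First, the Gibbs form of $Q$ controls the divergence: since $\rnd{Q}{P}(v) = \e^{h(v)}/P[\e^{h}]$, we have $\KL{Q}{P} = \EEE{v \sim Q}\brackets{h(v)} - \ln P[\e^{h}] \le \Lpnorm{\infty}{P}{h}$, because $\EEE{v \sim Q}\brackets{h(v)} \le \Lpnorm{\infty}{P}{h}$ (as $Q \ll P$) and $\ln P[\e^{h}] \ge 0$ (as $h \ge 0$ forces $\e^{h} \ge 1$) --- this is the only place the hypothesis $h \ge 0$ enters. Second, since $f$ is a $1$-Lipschitz function of a standard Gaussian vector, Gaussian concentration yields the sub-Gaussian moment bound $\ln P[\e^{\lambda f}] \le \lambda\mu + \lambda^2/2$ for every $\lambda > 0$ (in particular $P[\e^{\lambda f}] < \infty$, so $\Gibbs{P}{\lambda f}$ is a probability measure). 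Applying \cref{KLlemma} with $P' = \Gibbs{P}{\lambda f}$ and discarding the nonnegative term $\KL{Q}{\Gibbs{P}{\lambda f}}$ gives the variational (Donsker--Varadhan) inequality $\lambda\,\EEE{v \sim Q}\brackets{f(v)} \le \KL{Q}{P} + \ln P[\e^{\lambda f}]$; note $\EEE{v \sim Q}\brackets{f(v)} \le \e^{\Lpnorm{\infty}{P}{h}}\mu < \infty$ by the crude pointwise bound, so every quantity here is finite. Combining the three estimates,
\[
\EEE{v \sim Q}\brackets{f(v)} \le \mu + \frac{\lambda}{2} + \frac{\Lpnorm{\infty}{P}{h}}{\lambda},
\]
and choosing $\lambda = \sqrt{2\,\Lpnorm{\infty}{P}{h}}$ (the cases $\Lpnorm{\infty}{P}{h} = 0$, where $Q = P$, and $\Lpnorm{\infty}{P}{h} = \infty$ being trivial) gives $\EEE{v \sim Q}\brackets{f(v)} \le \mu + \sqrt{2\,\Lpnorm{\infty}{P}{h}}$, as required.

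The only real obstacle is seeing the idea: the divergence budget $\KL{Q}{P} \le \Lpnorm{\infty}{P}{h}$ should be exchanged, through the variational inequality, for a Gaussian exponential moment of $\norm{v-w}_{\SigmaInv}$ and then optimized over $\lambda$. With that in hand, the remaining steps --- the Gibbs computation of $\KL{Q}{P}$, the $1$-Lipschitz concentration bound (which one may cite directly or re-derive via the Herbst argument), the one-dimensional optimization in $\lambda$, and the folded-normal expectation --- are routine.
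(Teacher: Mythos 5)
Your proof is correct and takes a genuinely different route from the paper's. The paper works directly in $L^1(P)$: writing $g = \rnd{Q}{P} = \e^{h}/P[\e^{h}]$ and $f(v)=\norm{v-w}_{\SigmaInv}$, it splits $\Lpnorm{1}{P}{fg}$ over the ellipsoid $\set{f\le R}$ and its complement, applies H\"older on each piece (giving $R$ inside, and $\Lpnorm{1}{P}{f\bar\chi}\,\e^{\Lpnorm{\infty}{P}{h}}$ outside), and chooses $R=\sqrt{2\Lpnorm{\infty}{P}{h}}$ so that the Gaussian tail factor $\e^{-R^2/2}$ exactly cancels the crude density bound $\e^{\Lpnorm{\infty}{P}{h}}$. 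You instead convert the sup-norm of $h$ into a divergence budget via $\KL{Q}{P}\le\Lpnorm{\infty}{P}{h}$ and spend it through the Donsker--Varadhan inequality (obtained, nicely, by applying \cref{KLlemma} to the tilted measure $\Gibbs{P}{\lambda f}$) against the sub-Gaussian moment bound for the $1$-Lipschitz function $f$, then optimize over $\lambda$. Both routes produce the same $\sqrt{2\Lpnorm{\infty}{P}{h}}$ term; yours buys the cleaner and stronger intermediate fact $\EEE{v\sim Q}f(v) \le \EEE{v\sim P}f(v) + \sqrt{2\KL{Q}{P}}$, which depends on $Q$ only through the KL divergence rather than through a pointwise density bound, while the paper's argument is more elementary (truncation plus H\"older, no concentration-of-measure input). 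One caveat, which your proof shares with the paper's rather than introduces: the additive constant $\sqrt{2/\pi}$ equals $\EEE{v\sim P}\norm{v-w}_{\SigmaInv}$ only in the one-dimensional case --- you flag this (``in the scalar case''), and the paper's identity $\Lpnorm{1}{P}{f\bar\chi}=\sqrt{2/\pi}\,\e^{-R^2/2}$ is likewise the scalar Gaussian tail formula; for $p$-dimensional $\HH$ with $p>1$, both arguments in fact yield the bound with $\sqrt{2/\pi}$ replaced by a dimension-dependent Gaussian mean (of order $\sqrt{p}$), so on this point your argument is exactly as strong as the paper's.
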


\begin{corollary}[Gaussian means close to private means]\label{twocontrol}
Let $m \in \Nats$, 
let $\Dist \in \ProbMeasures{Z}$, 
let $S \sim \Dist^m$,
let $\Alg : Z^m \randto \HH$ be $\epsilon$-differentially private,
and let $w(S)$ denote a data-dependent mean vector such that, 
for some $w^*(S)$ satisfying $\Pr[w^*(S)|S] = \Pr[\Alg(S)|S]$,
we have 
\begin{equation}\label{meanbound}
\norm{w(S) - w^*(S)}^2_2 \le C
\end{equation}
with probability at least $1-\delta'$.
Let $\mineig$ be the minimum eigenvalue of $\Sigma$.
Then, with probability at least $1- \delta - \delta'$,
\cref{DPpacbound} holds
with $\KL{Q}{\PAlg(S)}$ replaced by 
$\KL{Q}{N(w(S))} +  \frac 1 2 \, {C}/{\mineig}  + \sqrt{{C}/{\mineig}} \, \EE_{v \sim Q}\norm{v-w(S)}_{\SigmaInv}$.
In particular, for a Gibbs posterior $Q = \Gibbs{P^S}{-\tau \smash{\SurRisk{S}{}}}$, we have
$\EE_{v \sim Q}\norm{v-w(S)}_{\SigmaInv} \le \sqrt{2\tau\Delta} + \sqrt{2/\pi}$.
\end{corollary}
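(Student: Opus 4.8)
The plan is to instantiate \cref{nonprivate} with the multivariate Gaussian family $\{N(w) : w \in \HH\}$ of fixed covariance $\Sigma$ and then control the extra Radon--Nikodym term using \cref{normboundRN} and \cref{GibbsboundRN}. Concretely, set $\PAlg(S) = N(\Alg(S))$, $P^*(S) = N(w^*(S))$, and $P^S = N(w(S))$. Because $w \mapsto N(w)$ is a fixed, data-independent measurable map, $\PAlg$ inherits $\epsilon$-differential privacy from $\Alg$ by post-processing, and the coupling $\Pr[w^*(S)\mid S] = \Pr[\Alg(S)\mid S]$ pushes forward to $\Pr[P^*(S)\mid S] = \Pr[\PAlg(S)\mid S]$. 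Moreover, two Gaussians with the same positive-definite covariance are mutually absolutely continuous, so $P^S \ll P^*(S)$ holds surely; hence \cref{nonprivate} applies with exceptional probability $0$.

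By \cref{nonprivate}, with probability at least $1-\delta$, \cref{DPpacbound} holds with $\KL{Q}{\PAlg(S)}$ replaced by $\KL{Q}{N(w(S))} + Q\brackets{\ln \rnd{N(w(S))}{N(w^*(S))}}$. Applying \cref{normboundRN} with $w' = w(S)$ and $w = w^*(S)$ gives
\begin{equation*}
Q\brackets[\Big]{\ln \rnd{N(w(S))}{N(w^*(S))}} \le \tfrac12 \norm{w(S)-w^*(S)}^2_{\SigmaInv} + \norm{w(S)-w^*(S)}_{\SigmaInv}\, \EEE{v\sim Q}\norm{v-w(S)}_{\SigmaInv}.
\end{equation*}
On the event \eqref{meanbound}, which holds with probability at least $1-\delta'$, I would bound the Mahalanobis norm by the Euclidean norm via $\norm{x}^2_{\SigmaInv} = x^\top \SigmaInv x \le \norm{x}^2_2 / \mineig$ for every $x$ (the largest eigenvalue of $\SigmaInv$ being $1/\mineig$), so that $\norm{w(S)-w^*(S)}^2_{\SigmaInv} \le C/\mineig$. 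Substituting produces exactly the stated replacement term $\KL{Q}{N(w(S))} + \tfrac12 C/\mineig + \sqrt{C/\mineig}\, \EEE{v\sim Q}\norm{v-w(S)}_{\SigmaInv}$; a union bound over the failure of \cref{nonprivate} and of \eqref{meanbound} keeps the overall probability at least $1-\delta-\delta'$.

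For the last assertion, specialize to $Q = \Gibbs{P^S}{-\tau \SurRisk{S}{}}$ with $P^S = N(w(S))$. A Gibbs distribution is unchanged when a constant is added to its energy, so replace $-\tau\SurRisk{S}{}$ by $h \defas \tau\bigl(\sup_{\ww}\SurRisk{S}{\ww} - \SurRisk{S}{}\bigr)$; since $\SurRisk{S}{}$ ranges over an interval of length $\Delta$, we have $0 \le h \le \tau\Delta$ pointwise on $\HH$, hence $\Lpnorm{\infty}{P^S}{h} \le \tau\Delta$. Now \cref{GibbsboundRN}, with its $w$ taken to be $w(S)$ (the mean of $P^S$), gives $\EEE{v\sim Q}\norm{v-w(S)}_{\SigmaInv} \le \sqrt{2\tau\Delta} + \sqrt{2/\pi}$, as claimed.

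I do not anticipate a genuine obstacle: the corollary is essentially bookkeeping on top of \cref{nonprivate,normboundRN,GibbsboundRN}. The only steps requiring a little care are the post-processing/coupling argument transferring $\Pr[w^*(S)\mid S] = \Pr[\Alg(S)\mid S]$ to the distributional identity on priors, the verification that the extra term $Q\brackets{\ln \rnd{P^S}{P^*(S)}}$ is finite (which follows from \cref{normboundRN} together with finiteness of $\EEE{v\sim Q}\norm{v-w(S)}_{\SigmaInv}$, and for Gibbs posteriors from \cref{GibbsboundRN}), and the additive-shift reduction needed to meet the nonnegative-energy hypothesis of \cref{GibbsboundRN}; all of these are routine once written out.
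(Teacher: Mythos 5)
Your proposal is correct and follows essentially the same route as the paper's proof: instantiate \cref{nonprivate} with $P^S = N(w(S))$ and $P^*(S) = N(w^*(S))$ (privacy preserved by post-processing), bound the extra term via \cref{normboundRN} together with $\norm{x}^2_{\SigmaInv} \le \norm{x}^2_2/\mineig$, take a union bound, and invoke \cref{GibbsboundRN} for the Gibbs-posterior case. Your explicit additive-shift argument making the energy nonnegative before applying \cref{GibbsboundRN} is a detail the paper leaves implicit, and it is handled correctly.
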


See \cref{proofofweakconv} for details and further discussion.

One way to achieve \cref{meanbound} is to construct $w_1(S),w_2(S),\dots$ so that 
$\Pr[w_n(S)|S] \to \Pr[\Alg(S)|S]$ weakly with high probability. Skorohod's representation theorem then implies the existence of $w^*(S)$.
One of the standard algorithms used to generate such sequences for high-dimensional Gibbs distributions is
stochastic gradient Langevin dynamics \citep[SGLD;][]{welling2011bayesian}.

In order to get nonasymptotic results, 
it suffices to 
bound the 2-Wasserstein distance of the SGLD Markov chain to stationarity.
Recall that the $p$-Wasserstein distance between $\mu$ and $\nu$ is
given by 
$
(\pWD{p}{\mu}{\nu})^p = \inf_\gamma \int \norm{v-w}_2^p \, \dee\gamma(v,w)
$
where the infimum runs over couplings of $\mu$ and $\nu$,
i.e., 
distributions $\gamma \in \ProbMeasures{\HH \times \HH}$ with marginals $\mu$ and $\nu$, respectively.

Let $\Alg(S)$ return a sample from the Gibbs posterior $\Gibbs{P}{-\tau\smash{\SurRisk{S}{}}}$ with Gaussian $P$
and surrogate risk $\smash{\SurRisk{S}{}}$ constructed from smooth loss functions taking values in a length-$\Delta$ interval. 
By \cref{dpGibbspSample}, $\Alg$ is $\frac{2 \tau \Delta}{m}$-differentially private.
Consider running SGLD to target $\Pr[\Alg(S)|S] = \Gibbs{P}{-\tau\smash{\SurRisk{S}{}}}$.
Assume\footnote{
The status of this assumption relates to results by \citet[Thm.~7.3]{MATTINGLY2002185} and \citet[Prop.~3.3]{RRT17},
under so-called dissipativity conditions on the regularized loss.
We have hidden potentially exponential dependence on $\tau$, which is problem dependent. 
See also \citep{EMS18}.
} that, for every $c > 0$, there is a step size $\eta > 0$ and number of SGLD iterations $n \in O(\frac 1 {c^q})$,
such that the $n$-th iterate $w(S) \in \HH$ produced by SGLD satisfies
$
\pWD [\big]{2}{ \Pr[w(S)|S] }{ \Pr[\Alg(S)|S]  } \le c.
$
Markov's inequality and the definition of $\ppWD_{2}$ immediately implies the following.

\begin{corollary}[Prior via SGLD]\label{sgldgen}
Under the above assumption, for some step size $\eta > 0$, 
the $n$-th iterate $w(S)$ of 
SGLD targeting 
$\Gibbs{P}{-\tau\smash{\SurRisk{S}{}}}$
satisfies \cref{twocontrol} with $\epsilon = \frac{2 \tau \Delta}{m}$ and $C \in O(\frac 1 {\delta' n^{1/q}})$.
\end{corollary}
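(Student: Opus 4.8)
The plan is to check that the $n$-th SGLD iterate meets the hypotheses of \cref{twocontrol}, with the $\epsilon$-differentially private mechanism $\Alg$ there taken to be the \emph{exact} Gibbs-posterior sampler $S \mapsto \Gibbs{P}{-\tau\SurRisk{S}{}}$, and then to read off the conclusion. Since $\Alg$ returns a single sample from $\Gibbs{P}{-\tau\SurRisk{S}{}}$ and $\SurRisk{S}{}$ ranges over an interval of length $\Delta$, \cref{dpGibbspSample} shows $\Alg$ is $\frac{2\tau\Delta}{m}$-differentially private, so we set $\epsilon = \frac{2\tau\Delta}{m}$, exactly the value named in the statement. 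It then remains to produce a copy $w^*(S)$ of $\Alg(S)$ that the SGLD iterate $w(S)$ tracks in squared Euclidean distance.

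Fix an iteration budget $n$ and invoke the standing assumption at accuracy $c$ chosen so that the guaranteed iteration count does not exceed $n$, i.e.\ $c \in \Theta(n^{-1/q})$: this yields a step size $\eta$ for which the $n$-th iterate $w(S)$ satisfies $\WD[\big]{\Pr[w(S)\mid S]}{\Pr[\Alg(S)\mid S]} \le c$. For each value of $S$, the definition of $\ppWD_2$ on the Polish space $\HH$ with quadratic cost furnishes an optimal coupling $\gamma_S$ of $\Pr[w(S)\mid S]$ and $\Pr[\Alg(S)\mid S]$; enlarging the probability space, introduce $w^*(S)$ so that, conditionally on $S$, the pair $(w(S),w^*(S))$ has law $\gamma_S$. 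Then $\Pr[w^*(S)\mid S] = \Pr[\Alg(S)\mid S]$ by construction, and $\EE[\norm{w(S)-w^*(S)}_2^2 \mid S] = \WD[\big]{\Pr[w(S)\mid S]}{\Pr[\Alg(S)\mid S]}^2 \le c^2$.

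Applying Markov's inequality conditionally on $S$ to the nonnegative random variable $\norm{w(S)-w^*(S)}_2^2$ gives $\Pr[\norm{w(S)-w^*(S)}_2^2 > c^2/\delta' \mid S] \le \delta'$ for every $S$, hence $\norm{w(S)-w^*(S)}_2^2 \le C \defas c^2/\delta'$ with (unconditional) probability at least $1-\delta'$, which is precisely \cref{meanbound}. Since $c \in \Theta(n^{-1/q})$ we have $C \in O(n^{-2/q}/\delta') \subseteq O(1/(\delta' n^{1/q}))$, as claimed. Feeding $\epsilon = \frac{2\tau\Delta}{m}$ and this $C$ into \cref{twocontrol} gives the stated bound; and since the Gibbs posterior $Q = \Gibbs{P^S}{-\tau\SurRisk{S}{}}$ is built from a surrogate risk of range $\Delta$, the estimate $\EE_{v\sim Q}\norm{v-w(S)}_{\SigmaInv} \le \sqrt{2\tau\Delta} + \sqrt{2/\pi}$ is exactly the ``in particular'' clause of \cref{twocontrol} (itself a consequence of \cref{GibbsboundRN}).

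The only delicate point is the coupling step: one wants the optimal transport plan $\gamma_S$ to be chosen measurably in $S$, so that $w^*(S)$ is a genuine random variable defined jointly with $S$ and $w(S)$ --- a routine measurable-selection argument (e.g.\ Kuratowski--Ryll-Nardzewski). Alternatively, one works with couplings within $c^2+\xi$ of optimal and lets $\xi \downarrow 0$, avoiding any appeal to attainment of the $\ppWD_2$ infimum. Everything else is bookkeeping; indeed, as remarked just before the statement, ``Markov's inequality and the definition of $\ppWD_2$ immediately imply'' the corollary.
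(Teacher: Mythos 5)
Your proof is correct and takes essentially the same route as the paper, which itself only remarks that ``Markov's inequality and the definition of $\ppWD_{2}$'' immediately give the corollary: you supply exactly that argument, coupling $w(S)$ to a copy $w^*(S)$ of the exact Gibbs sample (whose privacy level $\epsilon = \frac{2\tau\Delta}{m}$ comes from \cref{dpGibbspSample}) and applying Markov's inequality to the squared distance, plus the measurable-selection caveat. Your resulting $C \in O(n^{-2/q}/\delta')$ is in fact slightly stronger than the stated $O\parens[\big]{\frac{1}{\delta' n^{1/q}}}$, which it implies for $n \ge 1$.
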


The dependence on $\delta'$ is poor. 
However, one can construct Markov chain algorithms that are geometrically ergodic, in which case $n^{-1/q}$ is replaced by a term $2^{-\Omega(n)}$,
allowing one to spend computation to control the $1/\delta'$ term.

\section{Empirical studies}

We have presented data-dependent bounds and so it is necessary to study them empirically to evaluate their usefulness. 
The goal of this section is to make a number of arguments.
First,
 it is an empirical question as to what value of the \invtemp{} $\tau$ is sufficient to yield small empirical risk from a Gibbs posterior. 
 Indeed, we compare to the bound  of \citet{LEVER2013}, presented above as \cref{leverbound}. As \citeauthor{LEVER2013} point out, this bound depends explicitly on $\tau$, where it plays an obvious role as a measure of complexity.
 Second, despite how tight this bound is for small values of $\tau$, 
 the bound must become vacuous before the Gibbs posterior \emph{would have} started to overfit on random labels
 because this bound holds for \emph{all} data distributions.
 We demonstrate that this phase transition happens well before the Gibbs posterior achieves its minimum risk on true labels.
 Third, because our bound retains the KL term, we can potentially identify easy data. 
 Indeed, our risk bound decreases beyond the point where the same classifier begins to overfit to random labels.
 Finally, our results suggest that we can use the property that SGLD converges weakly to investigate the generalization properties of Gibbs classifiers.  More work is clearly needed to scale our study to full fledged deep learning benchmarks. 
 (See \cref{KLeval} for details on how we compute the KL term and challenges there due to Gibbs posteriors $Q$ not be exactly samplable.)%

More concretely, we perform an empirical evaluation using SGLD to approximate simulating from Gibbs distributions. 
\cref{twocontrol} provides some justification by showing that a slight weakening of the PAC-Bayes generalization bound is valid provided that SGLD eventually controls the 2-Wasserstein distance to stationarity. 
However, because we cannot measure our convergence in practice, it is an empirical question as to whether our samples are accurate enough. 

Violated bounds would be an obvious sign of trouble. We expect the bound on the classification error not to go below the true error as estimated on the heldout test set (with high probability).
We perform an experiment on a MNIST (and CIFAR10, with the same conclusion so we have not included it) using 
true and random labels and find that no bounds are violated. The results suggest that it may be useful to empirical study bounds for Gibbs classifiers using SGLD.

Our main focus is a sythentic experiment comparing the bounds of \citet{LEVER2013} to our new bounds based on privacy.
The main finding here is that, as expected, the bounds by \citeauthor{LEVER2013} must explode when the Gibbs classifier begins to overfit random labels, whereas our bounds, on true labels, continue to track the training error and bound the test error.

\subsection{Setup}

Our focus is on classification 
by neural networks into $K$ classes. Thus $Z= X \times [K]$, and 
we use neural networks that output probability vectors over these $K$ classes.
Given weights $\ww \in \HH$ and input $x \in X$, the probability vector output by the network 
is $p(\ww,x) \in [0,1]^K$.
Networks are trained by minimizing cross entropy loss:
$\loss(\ww,(x,y)) = g(p(\ww,x),y)$, where
$g((p_1,\dots,p_K),y) = -\ln p_{y}$. 
Note that cross entropy loss is merely bounded below. 
We report results in terms of $\set{0,1}$-valued classification error:
$\loss(\ww,(x,y)) = 0$ if and only if $y$ is the largest coordinate of $p(\ww,x)$.

We refer to elements of $\HH$ and $\RHH$ as classifiers and randomized classifiers, respectively, and to the (empirical) 0--1 risk as the (empirical) error.
We train two different architectures using SGLD on MNIST and a synthetic dataset, SYNTH. The experimental setup is explained in \cref{app:setup}.

\paragraph{One-stage training procedure}

We run SGLD for $T$ training epochs with a fixed value of the parameter $\tau$.
We observe that convergence appears to occur within 10 epochs, but
use a much larger number of training epochs to potentially expose nonconvergence behavior.
The value of the \invtemp{} $\tau$ is fixed during the whole training procedure. 

\paragraph{Two-stage training procedure}

In order to evaluate our private PAC-Bayes bound (\cref{DPpacbayes}), we perform a two-stage training procedure: 
\begin{itemize}
	\item \textbf{Stage One.} We run SGLD for $T_1$ epochs with \invtemp{} $\tau_1$, minimizing the standard cross entropy objective. Let $w_0$ denote the neural network weights after stage one.
	\item \textbf{Transition.} We restart the learning rate schedule and continue SGLD for $T_1$  epochs with linearly annealing the temperature between $\tau_1$ and $\tau_2$, i.e.,
		 \invtemp{} $\tau_t = ((t-T_1)\tau_2 + (2T_1-t)* \tau_1)/T_1$, where $t$ is the current epoch number. 
	The objective at $\ww$ is the cross entropy loss for $\ww$ plus a weight-decay term $\frac{\gamma}{2}\| \ww- \ww_0\|_2^2$. 	 
	\item \textbf{Stage Two.} We continue SGLD for $T_2-T_1$ epochs with \invtemp{} $\tau_2$. 
	The objective is the same as in the transition stage.
\end{itemize}	

During the first stage, the $k$-step transitions of SGLD converge weakly towards a Gibbs distribution with a uniform base measure, producing a random vector $\ww_0 \in \HH$.
The private data-dependent prior $P_{\ww_0}$ is the Gaussian distribution centred at $\ww_0$ with diagonal covariance $\frac{1}{\gamma} I_{p}$.
During the second stage,
SGLD converges to the Gibbs posterior with a Gaussian base measure $P_{\ww_0}$, 
i.e., $\Qt{\tau_2} = P_{\exp \parens{ -\tau_2 \EmpRisk{S}{} }}$.

\paragraph{Bound calculation}

Our experiments evaluate different values of the \invtemp{} $\tau$.

We evaluate Lever bounds for the randomized classifier $\Qt{\tau}$ obtained by the one-stage training procedure, with $T=1000$. 
We do so on both the MNIST and \syn{} datasets.

We also evaluate our private PAC-Bayes bound (\cref{DPpacbayes}) for the randomized classifier $\Qt{\tau_2}$ and the private data-dependent prior $P_{\ww_0}$, where the privacy parameter depends on $\tau_1$. 
The bound depends on the value of the $\KL{\Qt{\tau_1}}{P_{\ww_0}}$. 
The challenges of estimating this term are described in \cref{KLeval}. 
We only evaluate the differentially private PAC-Bayes bounds on the small neural network and \syn{} dataset,

The parameter settings for SYNTH experiments are: $T1=100$, $T2=1000$, $\gamma=2$; for MNIST: $T1=500$, $T2=1000$, $\gamma=5$. When evaluating Lever bounds with a one-stage learning procedure for either datasets, $T=1000$. 

\subsection{Results}

\begin{figure}[t]
\centering
\includegraphics[width=.35\linewidth]{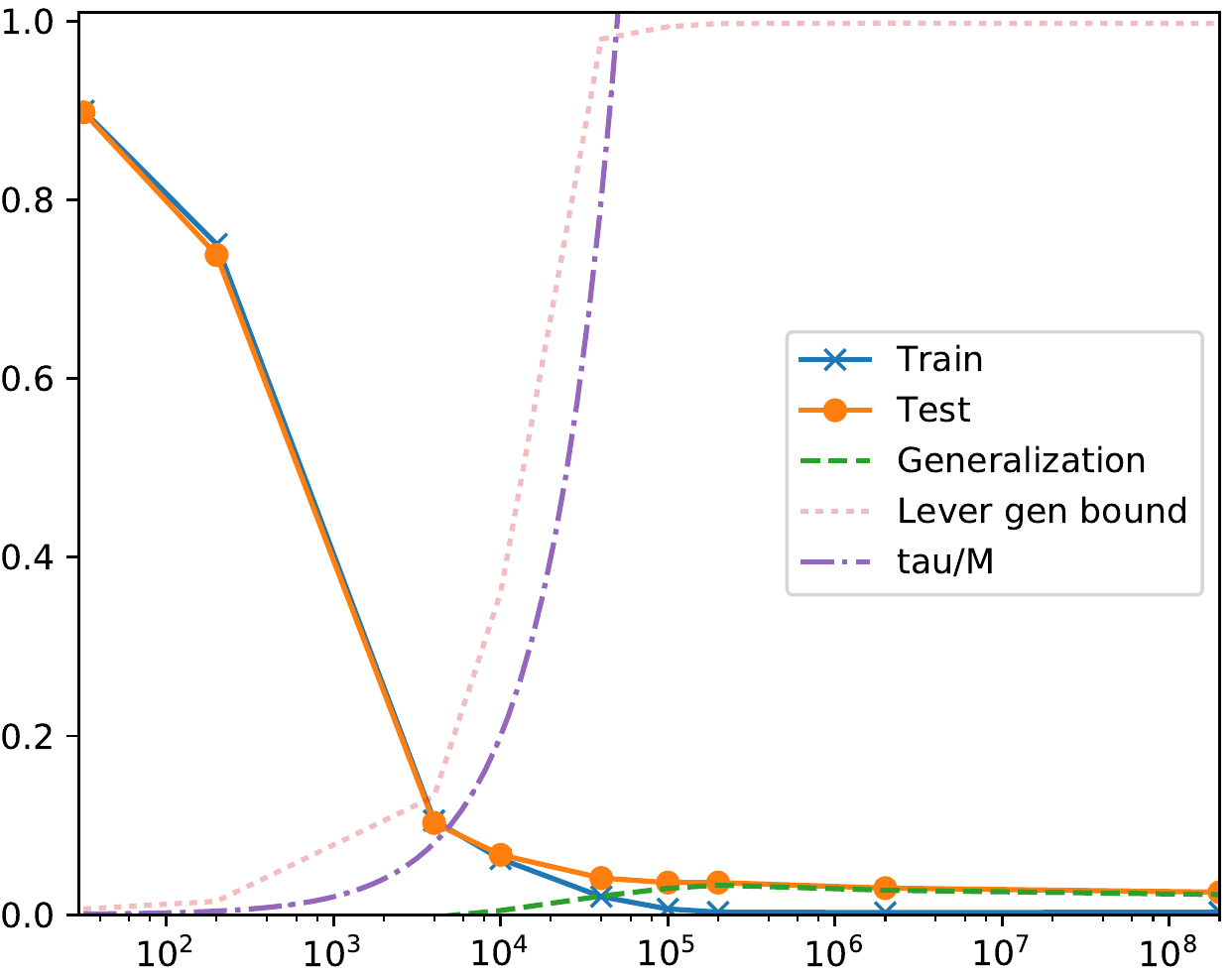}\ 
\includegraphics[width=.35\linewidth]{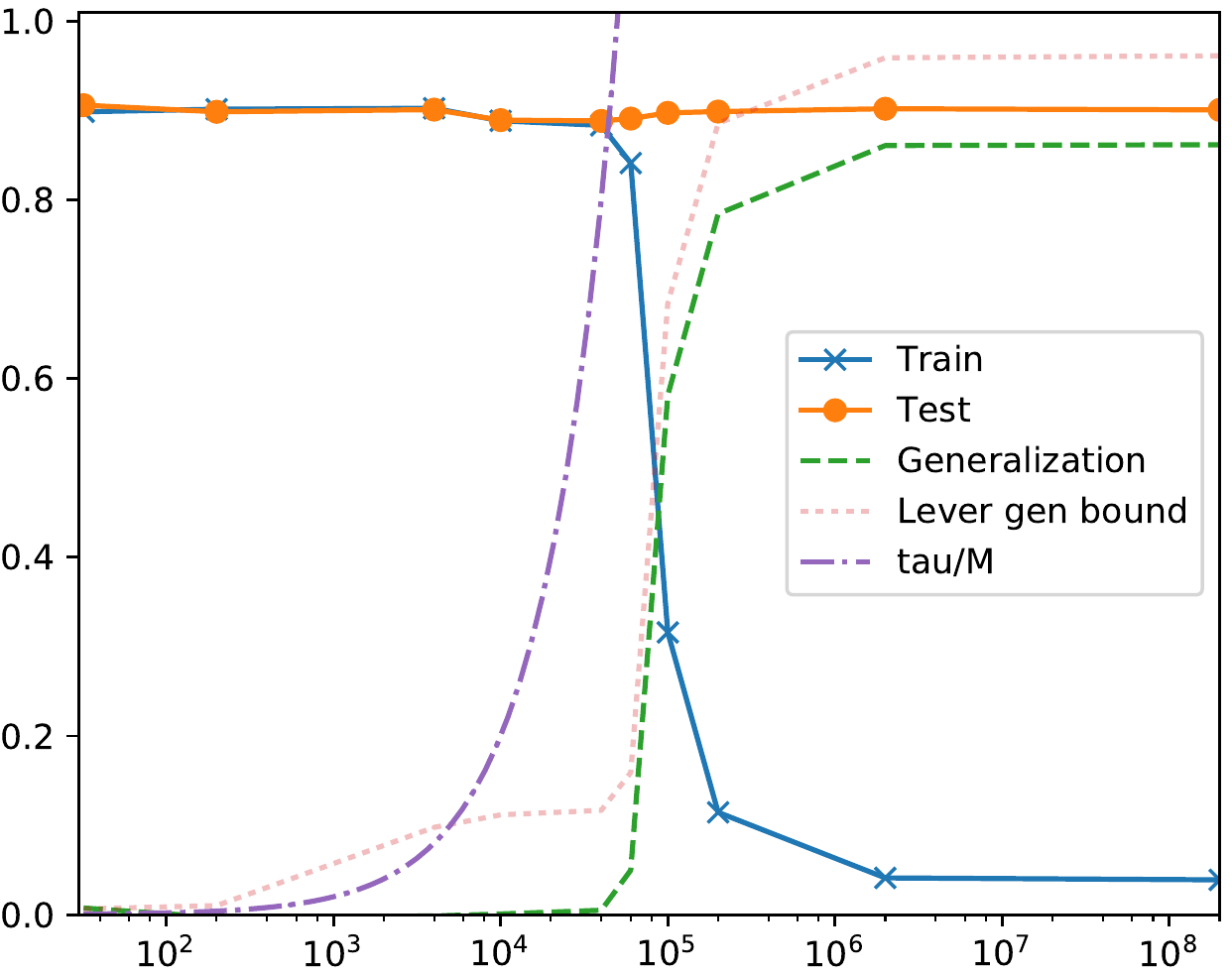}
\caption{Results for a fully connected neural network trained on MNIST dataset with SGLD and a fixed value of $\tau$. We vary $\tau$ on the x-axis. The y-axis shows the average 0--1 loss. We plot the estimated generalization gap, which is the difference between the training and test errors. 
The left plot shows the results for the true label dataset. We observe, that the training error converges to zero as $\tau$ increases. Further, while the generalization error increases for intermediate values of $\tau$ ($10^{4}$ to $10^{6}$), it starts dropping again as $\tau$ increases. We see that the Lever bound fails to capture this behaviour due to the monotonic increase with $\tau$. 
The right hand side plot shows the results for a classifier trained on random labelling of MNIST images. The true error is around $0.9$. For small values of $\tau$ (under $10^3$) the network fails to learn and the training error stays at around $0.9$. When $\tau$ exceeds the number of training points, the network starts to overfit heavily. The sharp increase of the generalization gap is predicted by the Lever bound.
}
\label{MNISTfig}
\end{figure}

\begin{figure}[t]
\centering
\includegraphics[width=.325\linewidth]{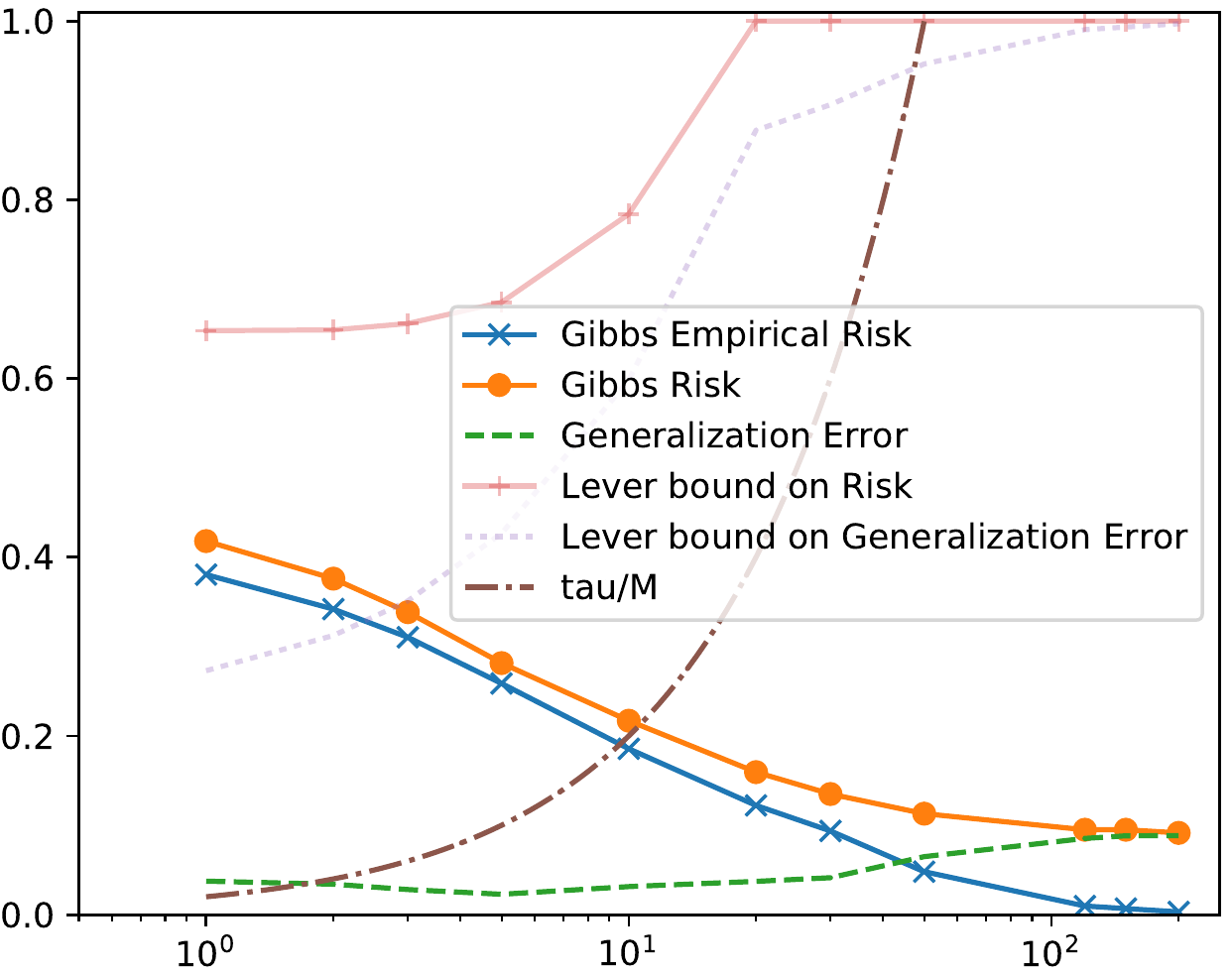}%
\includegraphics[width=.325\linewidth]{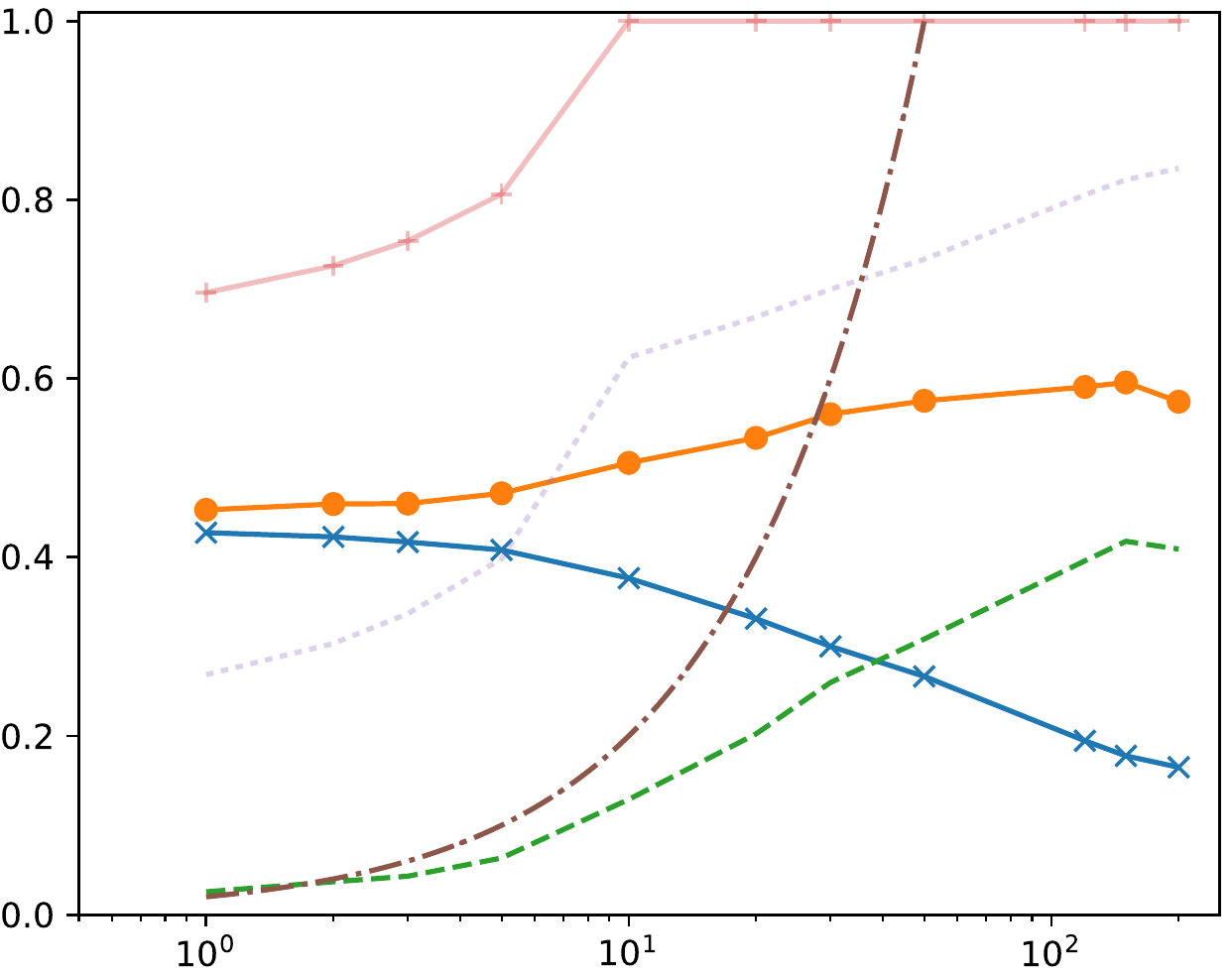}%
\includegraphics[width=.35\linewidth]{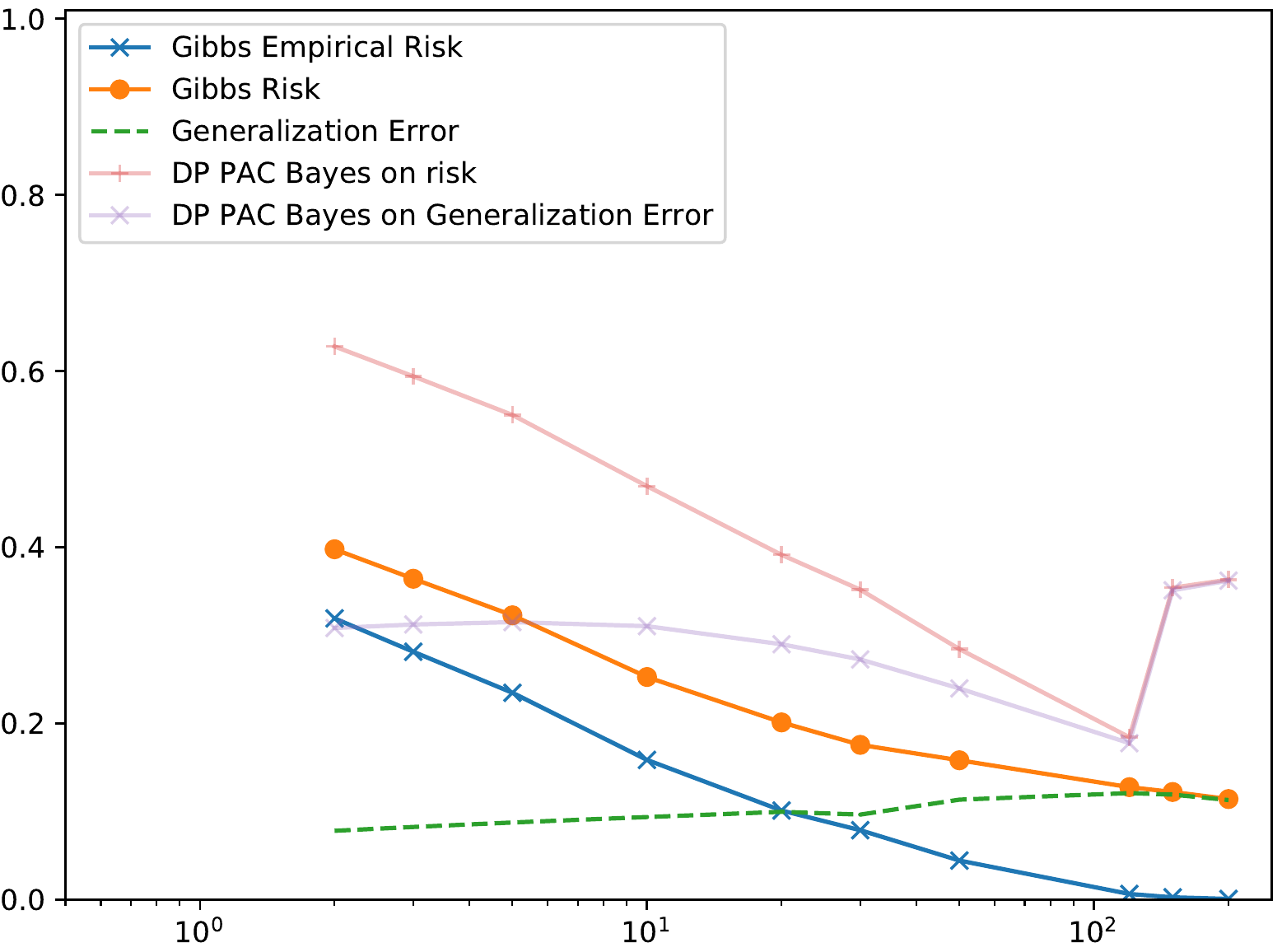}
\caption{
Results for a small fully connected neural network trained on a synthetically generated dataset \syn, consisting of 50 training examples. 
The x-axis shows the $\tau$ value, and the y-axis the average 0--1 loss. To generate the top plots, we train the network with a one-stage SGLD. The top-left plot corresponds to the true label dataset, and the top-right to the random label dataset. Similarly as in MNIST experiments, we do not witness any violation of the Lever bounds. Once again, we notice that Lever bound gets very loose for larger values of $\tau$ in the true label case. 
The bottom plot demonstrates the results for the two-stage SGLD. In this case the x-axis plots the $\tau$ value used in the second-stage optimization. The first stage used $\tau_1=1$. The network is trained on true labels. We see that the differentially private PAC-Bayes bound yields a much tighter estimate of the generalization gap for larger values of $\tau$ than the Lever bound (top left). When $\tau$ becomes very large relative to the amount of training data, it becomes more difficult to sample from the Gibbs posterior. This results in a looser upper bound on the KL divergence between the prior and posterior.
}
\label{synth}
\end{figure}

Results are presented in \cref{MNISTfig,synth}. 
We never observe a violation of the PAC-Bayes bounds for Gibbs distributions. This suggests that 
our assumption that SGLD has nearly converged is accurate enough or the bounds are sufficiently loose that any effect from nonconvergence was masked. 

Our MNIST experiments highlight that the Lever bounds upper bound the risk for \emph{every} possible data distribution, including the random label distribution.
In the random label experiment (\cref{MNISTfig}, right plot), 
when $\tau$ gets close to the number of training samples, 
the generalization error starts increasing steeply.
This phase transition is captured by the Lever bound. 
In the true label experiment (right plot), the generalization error does not rise with $\tau$. Indeed,
it continues to decrease, and so 
the Lever bound quickly becomes vacuous as we increase $\tau$.
The Lever bound cannot capture this behavior because it must simultaneously bound the generalization error under random labels.

On the \syn{} dataset, we see the same phase transition under random labels and so Lever bounds remain vacuous after this point. In contrast, we see that our private PAC-Bayes bounds can track the error beyond the phase transition that occurs under random labels. (See \cref{synth}.) At high values of $\tau$, 
our KL upper bound becomes very loose.

\paragraph{Private versus Lever PAC-Bayes bound}

While Lever PAC-Bayes bound fails to explain generalization for high $\tau$ values, our private PAC-Bayes bound may remain nonvacuous. 
This is due to the fact that it retains the KL term, which is sensitive to the data distribution via $Q$, and thus it can be much lower than the upper bound on the KL in \citet{LEVER2013}  for datasets with small true Bayes risk. %
Two stage optimization, inspired by the DP PAC-Bayes bound, allows us to obtain more accurate classifiers  by setting a higher  \invtemp{} parameter at the second stage, $\tau_2$.

We do not plot DP PAC-Bayes bound for MNIST experiments due to the computational challenges approximating the KL term for a high-dimensional parameter space, as discussed in  \cref{KLeval}. 
We evaluate our private PAC-Bayes bound on MNIST dataset only for a combination of $\tau_1=10^3$ and $\tau_2\in [3*10^3, 3*10^{4}, 10^5, 3*10^5 ] $. 
The values are chosen such that $\tau_1$ gives a small penalty for using the data to learn the prior and $\tau_2$ is chosen such that
at $\tau = \tau_2$ Lever's bound returns a vacuous bound (as seen in \cref{MNISTfig}). We use $10^5$ number of samples from the DP Gaussian prior learnt in stage one to approximate $\ln \Zt{}$ term that appears in the KL, as defined in \cref{logzdefn}.

The results are presented in the table below.
 While DP PAC-Bayes bound is very loose, it is still smaller than Lever's bound for high values of \invtemp{}.
 
 Note, that for smaller values $\tau_2$, we can use Lever's upper bound on the KL term instead of performing a Monte Carlo approximation. Since $\tau_1$ is small and adds only a small penalty ($\sim 1\%$), the DP PAC-Bayes bound is equal to Lever's bound plus a differential privacy penalty ($\sim 1\%$).
 
\begin{center}
\begin{tabular}{c | c c c c}
   $\tau_2$ & $3*10^3$ & $3*10^{4}$ & $10^5$ & $3*10^5$ \\
   \hline
  Test & 0.12 & 0.07  & 0.06 & 4 \\
  DP PAC-Bayes bound on test & 0.21 & 0.35 & 0.65 & 1 \\
  Lever PAC-Bayes with $\tau = \tau_2$ on test & 0.26 & 1 & 1 & 1 \\
\end{tabular}
\end{center}

\newpage
\subsection*{Acknowledgments}
The authors would like to thank 
Olivier Catoni,
Pascal Germain,
Mufan Li,
David McAllester,
and
Alexander Rakhlin,
John Shawe-Taylor,
for helpful discussions.
This research was carried out in part while the authors were visiting the Simons Institute for the Theory of Computing at UC Berkeley.
GKD was additionally supported by an EPSRC studentship.  
DMR was additionally supported by an
NSERC Discovery Grant and Ontario Early Researcher Award.

\printbibliography%

\newpage
\appendix

\textbf{Supplementary Material for ``Data-dependent PAC-Bayes priors via differential privacy''} \\
See \url{https://arxiv.org/abs/1802.09583} for the full paper.

\section{Basic Differential Privacy}
\label{DPintro}

See \citep{Dwork2006,dwork2014algorithmic} for more details.

Let $U,U_1,U_2,\dots$ be independent uniform $(0,1)$ random variables, 
independent also of any other random variables unless stated otherwise,
and let $\pi : \Nats \times [0,1] \to [0,1]$ satisfy $(\pi(1,U),\dots,\pi(k,U)) \eqdist (U_1,\dots,U_k)$ for all $k \in \Nats$. 
Write $\pi_k$ for $\pi(k,\cdot)$.

\begin{definition}
Let $R,T$ be measurable spaces. A \defn{randomized algorithm} $\Alg$ from $R$ to $T$, denoted $\Alg\colon R \randto T$,
is a measurable map $\Alg\colon [0,1] \times R \to T$.
Associated to $\Alg$ is a (measurable) collection of random variables $\{ \Alg_r : r \in R \}$ that satisfy
$\Alg_r = \Alg(U,r)$. 
When there is no risk of confusion, we write $\Alg(r)$ for $\Alg_r$.
\end{definition}

For our purposes, we rely only on the fact that privacy is preserved under post-processing, which we now define.\footnote{It is sometimes more natural to refer to the differential privacy of probability kernels, i.e.,
measurable maps from $Z^m$ to $\ProbMeasures{T}$, and to $S$-measurable 
random probability measures $Q$, i.e., probability kernels defined on the basic probability space satisfying
$Q = g(S)$ for some probability kernel $g : Z^m \to \ProbMeasures{T}$, where $S \sim \Dist^m$.
In both cases, the connection to the above definition is the same: for every probability kernel $\kappa : R \to \ProbMeasures{T}$
there exists $\Alg \colon [0,1] \times R \to T$ such that $\Alg(U,r) \sim \kappa(r)$ for every $r \in R$.
In the other direction, clearly $\kappa(r)(A) = \Pr \set{ \Alg(U,r) \in A}$ for every measurable $A \subseteq T$.}

\begin{definition}
Let $\Alg \colon R \randto T$ and $\Alg' \colon T \randto T'$.
The \defn{composition} $\Alg' \circ \Alg \colon R \randto T'$ is given by
$(\Alg' \circ \Alg)(u,r) = \Alg'(\pi_2(u),\Alg(\pi_1(u),r))$.
\end{definition}

\begin{lemma}[post-processing]\label{dppost}
Let $\Alg\colon Z^m \randto T$ be $(\epsilon,\delta)$-differentially private
and let $F \colon T \randto T'$ be arbitrary.
Then $F \circ \Alg$ is $(\epsilon,\delta)$-differentially private.
\end{lemma}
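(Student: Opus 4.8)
The plan is to reduce the claim to the pointwise differential privacy of $\Alg$ by conditioning on the internal randomness that $F$ uses. Recall from the definition of composition that $(F \circ \Alg)(u,S) = F(\pi_2(u), \Alg(\pi_1(u), S))$, and that by the defining property of $\pi$ the random variables $\pi_1(U)$ and $\pi_2(U)$ are independent, each uniform on $[0,1]$. In particular, $\Alg(\pi_1(U), S) \eqdist \Alg(U,S) = \Alg(S)$, and this copy of the output $\Alg(S)$ is independent of $\pi_2(U)$.

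First I would fix $S,S' \in Z^m$ differing in a single coordinate and a measurable set $B' \subseteq T'$. Since $F \colon [0,1]\times T \to T'$ is measurable, the preimage $E \defas F^{-1}(B') \subseteq [0,1]\times T$ is measurable, and hence for each $u_2 \in [0,1]$ its section $B_{u_2} \defas \event{t \in T : F(u_2,t) \in B'}$ is a measurable subset of $T$, with $u_2 \mapsto \Pr\event{\Alg(S)\in B_{u_2}}$ measurable. Writing the event $\event{(F\circ\Alg)(S)\in B'}$ as $\event{(\pi_2(U), \Alg(\pi_1(U),S)) \in E}$ and using the independence noted above together with Tonelli's theorem, I would obtain
\[
\Pr\event{(F\circ\Alg)(S)\in B'} = \int_0^1 \Pr\event{\Alg(S)\in B_{u_2}}\,\dee u_2,
\]
and the analogous identity with $S'$ in place of $S$.

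Next I would invoke the hypothesis that $\Alg$ is $(\epsilon,\delta)$-differentially private, applied to each section separately: for every $u_2 \in [0,1]$,
\[
\Pr\event{\Alg(S)\in B_{u_2}} \le \e^\epsilon\, \Pr\event{\Alg(S')\in B_{u_2}} + \delta.
\]
Integrating this inequality over $u_2\in[0,1]$ and substituting the two displayed identities yields $\Pr\event{(F\circ\Alg)(S)\in B'} \le \e^\epsilon\, \Pr\event{(F\circ\Alg)(S')\in B'} + \delta$, which is exactly $(\epsilon,\delta)$-differential privacy of $F\circ\Alg$.

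The only delicate point — and the step I would be most careful about — is the measure-theoretic bookkeeping behind the first displayed identity: checking measurability of the sections $B_{u_2}$ and of the map $u_2 \mapsto \Pr\event{\Alg(S)\in B_{u_2}}$, and justifying the exchange of the integral over $\pi_2(U)$ with the probability, which rests on the independence of $\pi_1(U)$ and $\pi_2(U)$ and nonnegativity (Tonelli). Once that identity is in hand, the rest is a one-line application of the pointwise privacy guarantee, with no dependence whatsoever on the structure of $F$, matching the intuition that privacy, being a statement about the output distribution, cannot be undone by further (possibly randomized) processing.
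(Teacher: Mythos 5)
Your proof is correct. Note that the paper itself states this post-processing lemma without proof (it is a standard fact from the differential privacy literature, cf.\ Dwork and Roth), so there is no in-paper argument to compare against; what you give is the standard proof, correctly adapted to the paper's formalization of randomized algorithms via the splitting map $\pi$. All the relevant points are handled: measurability of the sections $B_{u_2}$ and of $u_2 \mapsto \Pr\event{\Alg(S)\in B_{u_2}}$ follows from measurability of $F$ and Tonelli; the identity $\Pr\event{(F\circ\Alg)(S)\in B'} = \int_0^1 \Pr\event{\Alg(S)\in B_{u_2}}\,\dee u_2$ uses exactly the independence of $\pi_1(U)$ and $\pi_2(U)$ guaranteed by the defining property $(\pi(1,U),\pi(2,U)) \eqdist (U_1,U_2)$, together with $\Alg(\pi_1(U),S) \eqdist \Alg(S)$; and the $\delta$ term survives the averaging precisely because the mixing measure (the law of $\pi_2(U)$) is a probability measure, so $\int_0^1 \delta\,\dee u_2 = \delta$. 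The argument specializes verbatim to the pure $\epsilon$-differentially private case ($\delta = 0$) used elsewhere in the paper, and, as you observe, nothing about the structure of $F$ is used beyond measurability.
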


\section{Proof of \cref{DPpacbayes}}
\label{proofofDPpacbayes}

We prove a slightly more general result.

\begin{theorem}\label{genDPpacbayes}
Fix a bounded loss $\loss \in [0,1]$.
Let $m \in \Nats$, 
let $\PAlg \colon Z^m \randto \ProbMeasures{\HH}$ be an $\epsilon$-differentially private data-dependent prior,
let $\Dist \in \ProbMeasures{Z}$,
and let $S \sim \Dist^m$.
Then, for all $\delta \in (0, 1)$ and $\beta \in (0,\delta)$, 
with probability at least $1-\delta$,
\begin{equation}\label{genDPpacbound}
      \forall Q \in \ProbMeasures{\HH},\ 
      \KLbin{\EmpRisk{S}{Q}}{\Risk{\Dist}{Q}} \le \frac {\KL{Q}{\PAlg(S)} + \ln \frac{2 \sqrt{m}}{\delta-\beta} }{m} + \epsilon^2/2 + \epsilon \sqrt{\frac{\ln(2/\beta)}{2m}} . 
\end{equation}
\end{theorem}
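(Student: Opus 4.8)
The plan is to run Maurer's PAC-Bayes bound (\cref{pacbayes}) inside the approximate-max-information machinery of \citet{dwork2015generalization}, transferring a high-probability guarantee from a data-independent prior to the data-dependent prior $\PAlg(S)$. Write $k\defas\epsilon^2 m/2+\epsilon\sqrt{m\ln(2/\beta)/2}$, so that \cref{dpthm} gives $\amaxinfalg{\beta}{\PAlg}{m}\le k$ and hence $\amaxinf{\beta}{S}{\PAlg(S)}\le k$; note $k/m$ is exactly the additive penalty $\epsilon^2/2+\epsilon\sqrt{\ln(2/\beta)/(2m)}$ in \cref{genDPpacbound}. Put $\gamma\defas(\delta-\beta)\e^{-k}$ and record the one-line identity $\ln\frac{2\sqrt m}{\gamma}=\ln\frac{2\sqrt m}{\delta-\beta}+k$. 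Define the failure event
\begin{equation*}
E\defas\Bigl\{(s,p)\in Z^m\times\ProbMeasures{\HH}\ :\ \exists\,Q\in\ProbMeasures{\HH},\ \KLbin{\EmpRisk{s}{Q}}{\Risk{\Dist}{Q}}>\tfrac{\KL{Q}{p}+\ln(2\sqrt m/\gamma)}{m}\Bigr\}.
\end{equation*}
By the identity above, the complement of $E$ evaluated at $(S,\PAlg(S))$ is precisely the event asserted in \cref{genDPpacbound}, so it suffices to show $\Pr[(S,\PAlg(S))\in E]\le\delta$.

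The delicate step — and the one I would be most careful about — is measurability: $E$ is cut out by an uncountable quantifier over $Q$, whereas the max-information definition only controls \emph{product-measurable} events. I would circumvent this by using that Maurer's bound is itself obtained in \citep{Maurer04} by applying Markov's inequality to a nonnegative, jointly $(s,p)$-measurable functional $\Phi(s,p)\defas\int_{\HH}\exp\!\bigl(m\,\KLbin{\EmpRisk{s}{\ww}}{\Risk{\Dist}{\ww}}\bigr)\,p(\dee\ww)$, which satisfies $\EEE{S'\sim\Dist^m}\Phi(S',p)\le 2\sqrt m$ for every fixed $p$ (Fubini together with Maurer's moment lemma). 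By the Donsker--Varadhan variational formula and Jensen's inequality (joint convexity of $\KLbin{\cdot}{\cdot}$ and linearity of $Q\mapsto(\EmpRisk{s}{Q},\Risk{\Dist}{Q})$ in $Q$), the event $\{\Phi(s,p)\le 2\sqrt m/\gamma\}$ implies the bound defining $E^{c}$; hence $E\subseteq\tilde E\defas\{(s,p):\Phi(s,p)>2\sqrt m/\gamma\}$, and $\tilde E$ is product-measurable. So it is enough to bound $\Pr[(S,\PAlg(S))\in\tilde E]$.

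Finally I would assemble the tail estimate. For each fixed $p$, Markov's inequality gives $\PPr{S'\sim\Dist^m}[(S',p)\in\tilde E]\le\tfrac{\EEE{S'}\Phi(S',p)}{2\sqrt m/\gamma}\le\gamma$. Taking $S'\sim\Dist^m$ independent of $\PAlg(S)$ and conditioning on $\PAlg(S)$ (Tonelli) yields $\Pr[(S',\PAlg(S))\in\tilde E]\le\gamma$. Applying the definition of $\beta$-approximate max-information with $X=S$, $Y=\PAlg(S)$, $X'=S'$, and using $\amaxinf{\beta}{S}{\PAlg(S)}\le k$,
\begin{equation*}
\Pr[(S,\PAlg(S))\in\tilde E]\ \le\ \e^{k}\,\Pr[(S',\PAlg(S))\in\tilde E]+\beta\ \le\ \e^{k}\gamma+\beta\ =\ (\delta-\beta)+\beta\ =\ \delta.
\end{equation*}
Since $E\subseteq\tilde E$, this gives $\Pr[(S,\PAlg(S))\in E]\le\delta$, and unwinding $E^{c}$ while substituting $k/m$ for the penalty yields \cref{genDPpacbound}. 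The original \cref{DPpacbayes} is the case $\beta=\delta/2$, for which $\ln\frac{2\sqrt m}{\delta-\beta}=\ln\frac{4\sqrt m}{\delta}$ and $\ln\frac2\beta=\ln\frac4\delta$.
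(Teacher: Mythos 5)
Your proposal is correct and follows essentially the same route as the paper's proof: apply the fixed-prior bound of \cref{pacbayes} at the reduced confidence level $\gamma=(\delta-\beta)\e^{-k}$, transfer the failure event from an independent copy $S'$ to $(S,\PAlg(S))$ via the definition of $\beta$-approximate max-information, and then substitute the bound of \cref{dpthm}. The only difference is cosmetic: where the paper applies Maurer's bound as a black box to the per-prior bad set $R(P)$, you re-derive it through the exponential-moment functional $\Phi$ and Donsker--Varadhan so that the transferred event is manifestly product-measurable --- a small but welcome refinement of a point the paper leaves implicit.
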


\begin{proof}
For every distribution $P$ on $\HH$,
let
\begin{equation}
R(P) = 
 \set[\Big]{ 
         S \in Z^m : 
        (\exists Q)\ 
            \KLbin{\EmpRisk{S}{Q}}{\Risk{\Dist}{Q}} \ge  m^{-1}\parens[\big]{\KL{Q}{P} + \ln \frac{2\sqrt{m}}{\delta'}  }
       }.
\end{equation}
It follows from \cref{pacbayes} that
$
\Pr_{S \sim \Dist^m}  \event{S \in R(P)} \le \delta'.
$
Let $\beta > 0$.
Then, by the definition of approximate max-information, we have
\[
\Pr_{S \sim \Dist^m}  \event{S \in R(\PAlg(S))} 
&\le \e^{\amaxinf{\beta}{\PAlg}{m}} \PPr{(S,S') \sim \Dist^{2m}}  \event{S \in R(\PAlg(S'))} + \beta \\
&\le \e^{\amaxinf{\beta}{\PAlg}{m}} \delta'  + \beta \defas \delta.
\]
We have $\delta' = \e^{-\amaxinf{\beta}{\PAlg}{m}}(\delta - \beta)$. 
Therefore,
with probability no more than $\delta$ over $S \sim \Dist^{m}$, 
\begin{equation} \label{DPeasy}
     \exists Q \in \ProbMeasures{\HH},\ 
      \KLbin{\EmpRisk{S}{Q}}{\Risk{\Dist}{Q}} 
      \ge \frac {\KL{Q}{\PAlg(S)} + \ln \frac{2\sqrt{m}}{\delta - \beta} + \amaxinf{\beta}{\PAlg}{m}}{m}.
\end{equation}
The result follows from replacing the approximate max-information $\amaxinf{\beta}{\PAlg}{m}$ 
with the bound provided by \cref{dpthm}.
\end{proof}

The theorem leaves open the choice of $\beta < \delta$.
For any fixed values for $\epsilon$, $m$, and $\delta$, it is easy to optimize $\beta$ to obtain the tighest possible bound.
In practice, however, the optimal bound is almost indistinguishable from that obtained by taking $\beta = \delta / 2$. 
For the remainder of the paper, we take this value for $\beta$, 
in which case, the r.h.s.\ of \cref{DPpacbound} is
\[
\frac {\KL{Q}{\PAlg(S)} + \ln \frac{4 \sqrt{m}}{\delta} }{m} + \epsilon^2/2 + \epsilon \sqrt{\frac{\ln(4/\delta)}{2m}} . 
\]

Note that the bound holds for all posteriors $Q$. In general the bounds are interesting 
only when $Q$ is data dependent, otherwise one can obtain tighter bounds
via concentration of measure results for empirical means of bounded i.i.d.~random variables.

When one is choosing the privacy parameter, $\epsilon$,
there is a balance between 
minimizing the direct contributions of $\epsilon$ to the bound (forcing $\epsilon$ smaller) and 
minimizing the indirect contribution of $\epsilon$ through the KL term for posteriors Q that have low empirical risk (forcing $\epsilon$ larger). 
One approach is to  compute the value of $\epsilon$ that achieves a certain bound on the excess generalization error.  In particular,
choosing $\epsilon^2/2 = \alpha$ contributes an additional gap of $\alpha$ to the KL-generalization error. 
Choosing $\alpha$ is complicated by the fact that there is a non-linear relationship between the generalization error and the KL-generalization error, depending on the empirical risk.
A better approach is often to attempt to balance the direct contribution with the indirect one.
Regardless, the optimal value for $\epsilon$ is much less than one, which can be challenging to obtain. 
We discuss strategies for achieving the required privacy in later sections.

\section{Proofs for \cref{weakconvsec}}
\label{proofofweakconv}

These results connect approximations to differential privacy with bounds on the KL term in a PAC-Bayesian bound, yielding PAC-Bayes bounds that hold even if the prior is chosen via a nonprivate mechanism.
In independent work, subsequent to our original arXiv preprint, \citet{NIPS2018_8134} combined PAC-Bayesian bounds and stability to leverage \emph{distribution} dependent priors. Their approach is distinct, though complimentary.
See also work by \cite{NIPS2017_6886}, who combines stability and PAC-Bayesian bounds in yet another way.

\begin{proof}[Proof of \cref{KLlemma}]
Assume $Q \ll P'$, for otherwise the bound is trivial as $\KL{Q}{P'} = \infty$. 
Then $Q \ll P$, because $P' \ll P$, and so
$
\rnd{Q}{P} = \rnd{Q}{P'} \rnd{P'}{P}
$
and
\[
\KL{Q}{P} - \KL{Q}{P'}
&= Q \brackets[\Big]{ \ln \rnd{Q}{P} } - \KL{Q}{P'} 
\\&= Q \brackets[\Big]{ \ln \rnd{Q}{P'} + \ln \rnd{P'}{P} } - \KL{Q}{P'}
\\&= \KL{Q}{P'} + Q \brackets[\Big]{\ln \rnd{P'}{P} } - \KL{Q}{P'}  
\\&= Q \brackets[\Big]{\ln \rnd{P'}{P} }.
\]
\end{proof}

\begin{proof}[Proof of \cref{nonprivate}]
Let $P^*(S)$ satisfy the conditions in the statement of the theorem. 
Then $P^*(S)$ is $\epsilon$-differentially private.
By \cref{DPpacbayes}, the bound in \cref{DPpacbound} holds with probability at least $1- \delta$
for the data-dependent prior $P^*(S)$ and all posteriors $Q$. 
By hypothesis, with probability $1-\delta-\delta'$, $P^S \ll P^*(S)$,
and so, by \cref{KLlemma},
$
\KL{Q}{P^*(S)} = \KL{Q}{P^S} + Q \brackets{\ln \rnd{P^S}{P^*(S)} }.
$
\end{proof}

\begin{proof}[Proof of \cref{normboundRN}]
Expanding the log ratio of Gaussian densities  and then applying Cauchy--Schwarz, we obtain
\begin{align}
 {\ln \rnd{N(w')}{N(w)}}  (v)
&= \frac 1 2  \parens[\big]{\norm{w - v}^2_{\SigmaInv} - \norm{w'-v}^2_{\SigmaInv} }  \\
&=  \ip{w'-w}{v}_{\SigmaInv} + \frac 1 2 \left(\norm{w}^2_{\SigmaInv} -\norm{w'}^2_{\SigmaInv} \right) \\
&= \frac 1 2 \ip{w'-w}{2v}_{\SigmaInv} - \frac 1 2 \ip{w'-w}{w+w'}_{\SigmaInv} \\
&= \frac 1 2 \ip{w'-w}{2v - w - 2w' + w'}_{\SigmaInv}   \\
&= \frac 1 2  \ip{w'-w}{2(v-w') + w'-w}_{\SigmaInv}  \\
&\le \frac 1 2 \norm{w'-w}^2_{\SigmaInv} +  \norm{w'-w}_{\SigmaInv} \, \, \norm{v-w'}_{\SigmaInv}. 
\end{align}
The result follows by taking the expectation with respect to $v \sim Q$.
\end{proof}
\begin{proof}[Proof of \cref{GibbsboundRN}]
Let $g = \rnd{Q}{P} \defas \frac {\e^h}{P[\e^h]}$. 
Then $\Lpnorm{1}{P}{g} = 1$ and $\Lpnorm{\infty}{P}{g} \le \e^{ \Lpnorm{\infty}{P}{h} }$.
Let $f(v) = \norm{v-w}_{\SigmaInv}$. 
Then 
$\EEE{v \sim Q} {\norm{v-w}_{\SigmaInv} }
= \Lpnorm{1}{Q}{f} = \Lpnorm{1}{P}{fg}$.
Finally,
let $\chi$ be the indicator function for the ellipsoid $\set{ v : \norm{v-w}_{\SigmaInv} \le R }$, 
and let $\bar\chi = 1 - \chi$.
Then $\Lpnorm{\infty}{P}{ f \chi} \le R$ and
\begin{align}
\Lpnorm{1}{P}{fg}   
&= \Lpnorm{1}{P}{ f g \chi }
      + \Lpnorm{1}{P}{ f g \bar \chi } \\
& \le  \Lpnorm{\infty}{P}{ f \chi} \, \Lpnorm{1}{P}{g} 
        + \Lpnorm{1}{P}{f \bar \chi} \, \Lpnorm{\infty}{P}{g}  
= R + 
         {\textstyle\sqrt{\frac {2}{\pi}}} \, \e^{-\frac{R^2}{2}} \e^{ \Lpnorm{\infty}{P}{h} },
\end{align}
where the inequalities follow from two applications of H\"older's inequality.
Choosing $R = \sqrt{2 \Lpnorm{\infty}{P}{h} }$ gives
$
\Lpnorm{1}{Q}{f} \le \sqrt{2 \Lpnorm{\infty}{P}{h} } + \sqrt{2/\pi}.
$
\end{proof}

\begin{proof}[Proof of \cref{twocontrol}] 
Let $P^S = N(w(S))$ and $P^*(S) = N(w^*(S))$.
By the closure of $\epsilon$-differential privacy under composition,
$P^*(S)$ is $\epsilon$-differentially private and is absolutely continuous with respect to $N(w)$ for all $w$, 
and so satisfies the conditions of \cref{nonprivate}.
In particular, with probability $1-\delta$,
\cref{DPpacbound} holds with 
$\KL{Q}{P^*(S)}$
replaced by $\KL{Q}{P^S} + Q \brackets{\ln \rnd{P^S}{P^*(S)} }$.

By hypothesis, with probability at least $1-\delta- \delta'$, 
it also holds that
$\norm{w(S)-w^*(S)}^2_2 \le C$.
Then, by \cref{normboundRN},
\begin{align}
Q \left[ \ln \rnd{P^S}{P^*(S)}\right]  
&\le \frac 1 2 \norm{w(S)-w^*(S)}^2_2/\mineig+  \norm{w(S)-w^*(S)}_2/\sqrt{\mineig}\ \EEE{v \sim Q} \norm{v-w(S)}_{\SigmaInv}  \\
&\le \frac 1 2 C/\mineig+ \sqrt{C/\mineig} \, \EEE{v \sim Q} \norm{v-w(S)}_{\SigmaInv}.
\end{align}
By \cref{GibbsboundRN}
$\EEE{v \sim Q} \norm{v-w(S)}_{\SigmaInv}$ is bounded for Gibbs measures based on a surrogate risk taking values in a length-$\Delta$ interval by $\sqrt{2 \tau \Delta} +\sqrt{2/\pi}$.
\end{proof}

\section{Bounded cross entropy}
\label{binobjective}

In order to achieve differential privacy, 
we work with a bounded version of the cross entropy loss.
The problem is associated with extreme probabilities near zero and one.
Our solution is to remap the probabilities $p \mapsto \psi(p)$, where
\begin{equation}\label{affine}
\psi(p) = \e^{-\Lmax} + ( 1- 2 \e^{-\Lmax}) p
\end{equation}
is an affine transformation that maps $[0,1]$ to $[\e^{-\Lmax},1-\e^{-\Lmax}]$, removing extreme probability values.
Cross entropy loss is then replaced by $g((p_1,\dots,p_K),y) = -\ln \psi(p_{y})$.
As a result, cross entropy loss is contained in the interval $[0,\Lmax]$.
We take $\Lmax = 4$ in our experiments.

\section{Computing PAC-Bayes bounds for Gibbs posteriors}
\label{KLeval}

For a given PAC-Bayes prior $P$ and dataset $S$, 
it is natural to ask which posterior $Q=Q(S)$
minimizes the PAC-Bayes bounds. In general,
some Gibbs posterior (with respect to $P$) is the minimizer. 
We now introduce the Gibbs posterior and discuss how we can
compute the term $\KL{Q}{P}$ in the case of Gibbs posteriors.

For a $\sigma$-finite measure $P$ over $\HH$
and function $g : \HH \to \Reals$, 
let $P[g]$ denote the expectation $\int g(h) P(\dee h)$ and, 
provided $P[g] < \infty$, let $P_{g}$ denote the probability measure on $\HH$, absolutely continuous with respect to $P$, with Radon--Nikodym derivative 
$
\rnd{P_{g}}{P}(h) = 
\frac{g(h)}{P[g]}.
$
A distribution of the form $P_{\exp \parens { - \tau g }}$  is generally referred to as a Gibbs distribution. 
A Gibbs \emph{posterior} is 
a probability measure of the form $P_{\exp \parens{ -\tau \EmpRisk{S}{} }}$ for some constant $\tau >0$.

The challenge of evaluating PAC-Bayes bounds for Gibbs posteriors is computing the KL term.
We now describe a classical estimate and show that it is going to be an upper bound with high probability.
Fix a prior $P$ and $\tau \ge 0$,
let $\Qt{\tau} = P_{\exp \parens{ -\tau \EmpRisk{S}{} }}$,
and let $\Zt{\tau} = P \brackets{ \exp \parens{ -\tau \EmpRisk{S}{} }}$. 
Then
\[
\KL{\Qt{\tau}}{P}
&= \Qt{\tau} \brackets[\Big]{ \ln \rnd{\Qt{\tau}}{P} } \\
&= \Qt{\tau} \brackets[\Big]{ \ln \frac {\exp \parens{- \tau \EmpRisk{S}{} }}{\Zt{\tau}} }\\
&= -\tau \Qt{\tau}[\EmpRisk{S}{}] - \ln \Zt{\tau}.
\]
Letting $W_1,\dots,W_n \sim \Qt{\tau}$,
we have 
\[
\Qt{\tau}[\EmpRisk{S}{}] 
= \sum_{i=1}^{n} \Qt{\tau}\brackets{ \EmpRisk{S}{} }
= \EE \brackets[\Big]{ \frac 1 n \sum_{i=1}^n \EmpRisk{S}{W_i} }.
\]
(The quantity within the expectation on the r.h.s.\ thus defines an unbiased estimator of $\Qt{\tau}$.)
In the ideal case, the samples are independent, and then the variance decays at an $n^{-1}$ rate.
In practice, it is often difficult to even sample from $\Qt{\tau}$ for high values of $\tau$. 
Indeed, using this approach, we would generally overestimate the risk, which means that we do not obtain an upper bound on the KL term. So instead, we approximate $ -\tau \Qt{\tau}[\EmpRisk{S}{}] \approx 0$. Despite this, we obtain nonvacuous bounds.
(For an alternative approach to this problem, see \citep{thiemann2017strongly}.)  

The second term is challenging to estimate accurately, even assuming that $P$ and $\Qt{\tau}$ can be
efficiently simulated.
One tack is to consider i.i.d.\ samples $V_1,\dots,V_n \sim P$, and note that
\[\label{logzdefn}
- \ln \Zt{\tau} = -\ln P[\exp \parens{-\tau \EmpRisk{S}{} }] 
&= -\ln \EE \brackets[\Big]{ \frac 1 n \sum_{i=1}^n \exp \parens {- \tau \EmpRisk{S}{V_i}} } \\
&\le \EE \brackets[\Big]{ -\ln  \frac 1 n \sum_{i=1}^n \exp \parens {- \tau \EmpRisk{S}{V_i}} },
\]
where the inequality follows from an application of Jensen's inequality. 
The quantity within the expectation on the r.h.s.\
thus forms an upper bound, and indeed, it is possible to show that it does not fall below the l.h.s.\ by $\epsilon$ with probability exponentially small in $\epsilon$.  
Thus we have a high-probability (near) upper bound on the term in the KL. One might be inclined to compute a normalized importance sampler, but since $Q$ cannot be effectively sampled, one does not obtain an upper bound with high probability.

The term $\ln \Zt{\tau} $ is a generalized log marginal likelihood, which, in our experiments, we approximate by sampling from a Gaussian distribution $P$. 
Numerical integration techniques rapidly diminish in accuracy with increasing dimensionality of the parameter space. 

Note, that due to the convexity of the exponential, samples $W_i \sim P$, for which $ \EmpRisk{S}{W_i}$ is close to zero, will dominate $\Zt{\tau}$. 
 Due to high dimensionality of the neural network parameter space, 
 with high probability a random sample $W_i$ from $P$ will not be far from minima of the empirical loss surface and therefore $\EmpRisk{S}{W_i}$ will be high. 
As a results, in our experiments we obtain a very loose upper bound on the KL.

\section{Experimental setup}
\label{app:setup}

\paragraph{Bounded loss}

While it is typical to train neural networks by minimizing cross entropy, this loss is unbounded
and our theory is developed only for bounded loss. 
We therefore work with a bounded version of cross-entropy loss, which
we obtain by preventing the network from producing extreme probabilities near zero and one. 
We describe our modification of the cross entropy in \cref{binobjective}.

\paragraph{Datasets}

We use two datasets. The first is MNIST, which consists of handwritten digit images with labels in $\{0,...,9\}$. The dataset contains 50,000 training images and 10,000 validation images. 

We also use a small synthetically generated dataset, which we refer to as \syn. 
The \syn{} dataset consists of 50 training data and 100 heldout data. Each input is a 4-dimensional vector sampled independently from a zero-mean Gaussian distribution with an identity covariance matrix. The true classifier is linear.  The norm of the separating hyperplane is  sampled from a standard normal. 

The random label experiments are performed on a dataset where the labels are independently and uniformly generated and thus the risk is 0.5 under 0--1 loss.

\paragraph{Architectures}

We use SGLD without any standard modifications (such as momentum and batch norm) to ensure that the stationary distribution is that of SGLD. For MNIST, we use a fully connected neural network architecture. The network has 3 layers and 600 units in each hidden layer. The input is a 784 dimensional vector and the output layer has 10 units. For the \syn{} dataset, we use a fully connected neural network with 1 hidden layer consisting of 100 units. The input layer has 4 units, and the output layer is a single unit.

\paragraph{Learning rate}

At epoch $t$, 
the learning rate is  $a_t = a_0*t^{-b}$, where $a_0$ is the initial learning rate and $b$ is the decay rate.
We set $b=0.5$ and use $a_0=10^{-5}$ for MNIST experiments and $a_0=10^{-3}$ for \syn{} experiments.

\paragraph{Minibatches} An epoch refers to the full pass through the data in mini batches of size 128 for MNIST data, and 10 for SYNTH data.

\end{document}